\documentclass[10pt,letterpaper]{article}

\usepackage[top=1in,bottom=1in,left=1in,right=1in]{geometry}

\usepackage[utf8]{inputenc} 
\usepackage[T1]{fontenc}    
\usepackage{hyperref}       
\usepackage{url}            
\usepackage{booktabs}       
\usepackage{amsfonts}       
\usepackage{nicefrac}       
\usepackage{microtype}      
\usepackage{xcolor}         

\usepackage{graphicx} 
\usepackage{amsmath, amssymb}
\usepackage{hyperref}
\usepackage{amsthm}
\usepackage{cleveref} 
\usepackage{color}
\usepackage{subcaption}

\newtheorem{theorem}{Theorem}

\newtheorem{lemma}{Lemma}
\newtheorem{corollary}{Corollary}
\newtheorem{remark}{Remark}

\crefname{definition}{Definition}{Definitions}
\crefname{lemma}{Lemma}{Lemmas}
\crefname{theorem}{Theorem}{Theorems}
\crefname{corollary}{Corollary}{Corollaries}
\crefname{figure}{Figure}{Figures}
\crefname{section}{Section}{Sections}
\crefname{remark}{Remark}{Remarks}

\DeclareMathOperator*{\argmax}{arg\,max}
\newcommand{\Prob}{\mathbb{P}}
\newcommand{\Exp}{\mathbb{E}}
\newcommand{\R}{\mathbb{R}}
\newcommand{\inv}[1]{{#1}^{-1}}
\newcommand{\norm}[1]{{\left\| #1 \right\|}}
\newcommand{\innerprod}[2]{{\left\langle #1, #2 \right\rangle}}
\newcommand{\sign}{{\rm sign}}
\newcommand{\Diag}{{\rm Diag}}
\newcommand{\zero}{\mathbf{0}}
\newcommand{\one}{\mathbf{1}}
\newcommand{\A}{\mathbf{A}}
\newcommand{\E}{\mathbf{E}}
\newcommand{\D}{\mathbf{D}}
\renewcommand{\L}{\mathbf{L}}
\newcommand{\M}{\mathbf{M}}
\newcommand{\F}{\mathbf{F}}
\renewcommand{\H}{\mathbf{H}}
\newcommand{\X}{\mathbf{X}}
\renewcommand{\v}{\mathbf{v}}
\newcommand{\x}{\mathbf{x}}
\newcommand{\y}{\mathbf{y}}
\newcommand{\e}{\mathbf{e}}
\newcommand{\yt}{{\y^*}}
\newcommand{\yh}{\hat{\y}}
\newcommand{\avg}{\mathrm{avg}}
\renewcommand{\a}{\mathbf{a}}
\renewcommand{\b}{\mathbf{b}}
\renewcommand{\c}{\mathbf{c}}
\renewcommand{\S}{\mathbb{S}}
\newcommand{\s}{\mathbf{s}}

\title{\textbf{Provable Guarantees for Spectral Structured Prediction}}

\author{Violet Zheng\\
CIS, The University of Melbourne\\
\texttt{violet.zheng@student.unimelb.edu.au}\\
\and
Jean Honorio\\
CIS, The University of Melbourne\\
\texttt{jean.honorio@unimelb.edu.au}}

\date{}

\begin{document}

\maketitle

\begin{abstract}
Structured prediction is the simultaneous prediction of multiple labels, and is widely used in various fields, such as natural language processing and computer vision.
In this paper, we study binary node label recovery on signed graphs with edge-flip noise, a model introduced by~\cite{globerson_2015}, via a simple spectral method that decodes node labels from the signs of the principal eigenvector of the noisy signed adjacency matrix. 
We develop graph structure-agnostic theoretical guarantees for approximate inference of node labels as well as guarantees for maximum angle deviation with respect to the ground truth node labels.
By leveraging tools from matrix concentration theory and eigenvector perturbation analysis, we derive new concentration inequalities that explicitly quantify the effect of the spectral gap of the adjacency matrix, number of nodes, degree distribution, and noise level.
As a corollary, we relate our general results to the Cheeger constant and provide results for different classes of graphs.
We perform several synthetic experiments to validate our theory.
To the best of our knowledge, we are the first to provide theoretical guarantees for the spectral-based approach.
As a byproduct of our analysis, we derive technical results that might be of independent interest and useful for other machine learning problems.
\end{abstract}

\section{Introduction} \label{sec:intro}

Inference in complex models drives much of the research in machine learning applications in many diverse domains, such as natural language processing, computer vision and speech processing.
Structured prediction refers to the prediction of complex structures from a given input.
Structured prediction has been shown to be useful in applications such as dependency parsing~\cite{Martins13,Zhang14}, part-of-speech tagging~\cite{Zhang15}, object detection~\cite{Felzenszwalb10}, scene understanding~\cite{Lee10,Wang10b}, phoneme/speech recognition~\cite{Keshet11,Zhang13b} and text-to-speech mapping~\cite{Tang12}.

In the context of Markov random fields (MRFs), for an undirected graph $G = (V,E)$ of $n$ nodes, one is interested in finding a solution to the following inference problem:
\begin{align}
\label{eq:mrf_inference}
\max_{\y \in L^n} & \sum_{i \in V, l \in L} \phi_i(m) 1[y_i=l] 
+ \sum_{(i,j) \in E, l,m \in L} \hspace{-0.15in} \phi_{ij}(l,m) 1[y_i = l, y_j = m],
\end{align}
where $L$ is the set of possible labels, $\phi_i(l)$ is the (unary) utility of assigning label $l$ to node $i$, and
$\phi_{ij}(l,m)$ is the (pairwise) utility of assigning labels $l$ and $m$ to the neighbors $i$ and $j$, respectively.
Very few cases of the optimization problem in \cref{eq:mrf_inference} are known to be solvable in polynomial time.
For example, \cref{eq:mrf_inference} is solvable in polynomial time for graphs with low treewidth via the junction tree algorithm~\cite{chandrasekaran2008complexity}.
For binary labels, i.e., $|L|=2$, \cref{eq:mrf_inference} is solvable in polynomial time for planar graphs via perfect matchings~\cite{schraudolph2009efficient}.
Finally, \cref{eq:mrf_inference} is solvable in polynomial time via graph cuts for binary labels and submodular pairwise potentials~\cite{boykov2006graph}.

Given the above, prior research has focused on theoretical guarantees for binary labels, i.e., $|L|=2$, on various classes of graphs for the correct recovery of all node labels~\cite{bello_2019} and the correct recovery of a fraction of node labels~\cite{foster2018inference,globerson_2015,ke_2025}.
Most importantly, the foundational paper of~\cite{globerson_2015} defines a model for the study of binary node label recovery on signed graphs with edge-flip noise.
This serves as a meaningful special case of \cref{eq:mrf_inference} that allows to analyze the impact of structural properties of graphs on inference.

Our contributions are as follows.
We study binary node label recovery on signed graphs with edge-flip noise~\cite{globerson_2015}, via a simple spectral method.
We develop graph structure-agnostic theoretical guarantees for approximate inference of node labels and for maximum angle deviation with respect to the ground truth.
To the best of our knowledge, we are the first to provide theoretical guarantees for the spectral-based approach as well as angle deviation results within the inference in structured prediction context.

Interestingly, our success rates resemble those of~\cite{bello_2019,ke_2025} but our spectral method is arguably more computationally efficient than semidefinite programming.
Our spectral method can be applied to various graph classes, and it is not limited to planar graphs~\cite{globerson_2015} or graphs with low treewidth~\cite{foster2018inference}.
Finally, we provide results for different classes of graphs and perform several synthetic experiments to validate our theory.

\section{Preliminaries} \label{sec:prelim}

In this section, we introduce the main model and notations that are used throughout the paper.

\subsection{Notation}

We use lowercase bold letters for vectors (e.g., $\b$) and uppercase bold letters for matrices (e.g., $\M$), and subscripts of non-bold letters to denote respective entries of a vector or matrix (e.g., $b_i$ and $M_{ij}$).
The Euclidean norm of a vector $\b$ is denoted as $\norm{\b}$.
The inner product of two vectors $\a$ and $\b$ is denoted as $\innerprod{\a}{\b}$.
A vector of all ones (or all zeros) is denoted as $\one$ (or $\zero$).
The spectral norm of a matrix $\M$ is denoted as $\norm{\M}$.
For a matrix $\M \in \R^{n \times n}$, we denote the dominant eigenvalue as $\lambda_1(\M)$, second dominant eigenvalue as $\lambda_2(\M)$ and so on, with the smallest eigenvalue being $\lambda_n(\M)$.
That is, $\lambda_1(\M) \geq \lambda_2(\M) \geq ... \geq \lambda_n(\M)$.
Meanwhile, $\v_1(\M)$ refers to the eigenvector associated with $\lambda_1(\M)$.
For a vector $\b$, $\Diag(\b)$ is the matrix containing $\b$ on its diagonal, and zero everywhere else.
The Hadamard product $\M = \F \circ \H$ represents an entrywise multiplication, i.e., $M_{ij} = F_{ij} H_{ij}$.

We assume graph $G=(V,E)$ has node set $V=\{1,\dots,n\}$ and edge set $E \subseteq V \times V$.
For convenience, we also represent the edge set as an adjacency matrix $\E$.
That is, $(i,j) \in E$ iff $E_{ij}=1$, and $(i,j) \notin E$ iff $E_{ij}=0$.
Matrix $\D$ is a diagonal degree matrix with $D_{ii}=d_i$ where $d_i = |\{ j \mid (i,j) \in E \}| = \sum_{j=1}^n E_{ij}$ is the degree of the $i$-th node.
The Laplacian of graph $G$ is $\L = \D-\E$.
In addition, $d_{\max} = \max_{i=1}^n d_i$ refers to the maximum node degree and $d_\avg = \frac{1}{n} \sum_{i=1}^n d_i$ refers to the average node degree.
We also define $d_{\Delta} = d_{\max}-d_\avg$.

Our main results focus on both angle deviations and the proportion of sign agreements.
We define $\theta : \R^n \times \R^n \to [0,\pi]$ to be a function that returns the angle between two vectors.
More formally, $\cos\theta(\a,\b) = \innerprod{\frac{\a}{\norm{\a}}}{\frac{\b}{\norm{\b}}}$.
We also define $\rho : \R^n \times \R^n \to [0,1]$ to be a function that returns the proportion of sign agreements between two vectors.
More formally, $\rho(\a,\b) = \frac{1}{n} \max\left( \sum_{i=1}^n 1[a_i b_i > 0], \, \sum_{i=1}^n 1[-a_i b_i > 0] \right)$.

\subsection{Inference in Structured Prediction} \label{sec:model}

Next, we present the model introduced by~\cite{globerson_2015}, for the study of binary node label recovery on signed graphs with edge-flip noise.
Consider a graph $G = (V, E)$ with node set $V = \{1,\dots,n\}$ and edge set $E \subseteq V \times V$.
Each node $i \in V$ has a ground truth node label $y_i^* \in \{-1,+1\}$.
We also write $\yt \in \{-1,+1\}^n$ to refer to the ground truth node labels for all nodes.

A noisy signed adjacency matrix $\A$ and noisy node observation vector $\c$ are generated as follows, using noise parameters $p,q \in [0,\frac{1}{2})$.
$A_{ij} = 0$ if there is no edge between nodes $i$ and $j$ in $G$, otherwise $A_{ij} = y_i^* y_j^*$ with probability $1 - p$ and $A_{ij} = -y_i^* y_j^*$ with probability $p$.
$c_i=y_i^*$ with probability $1-q$ and $c_i=-y_i^*$ with probability $q$.

The inference problem seeks to recover the true node-label vector $\yt$ from the given $\A$ and $\c$.
As the noise level $p$ (or $q$) increases, exact recovery becomes more difficult.
The maximum likelood estimator (MLE) for exact recovery of the node labels is:
\[
\max_{\y \in \{-1,+1\}^n} \frac{1}{2}\y^\top \A \y+\frac{\log (1-q)/q}{\log (1-p)/p} \c^\top \y ,
\]
which is a special case of \cref{eq:mrf_inference}.
Unfortunately, since the noise parameters $p,q$ are unknown and so the factor $\frac{\log (1-q)/q}{\log (1-p)/p}$, one cannot solve the above problem directly.
To tackle this issue, \cite{globerson_2015} introduced a two-stage approach, loosely based on the MLE.
A first stage solves the following optimization problem:
\begin{align} \label{eq:pairwise}
\yh = \argmax_{\y \in \{-1,+1\}^n} \frac{1}{2}\, \y^\top \A \y,
\end{align}
for which two possible results, $\yh$ and $-\yh$ are equivalent in the sense of their objective function value being the same.\footnote{Note that $\yh^\top \A \yh = (-\yh)^\top \A (-\yh)$, and thus we focus our analysis on $|\cos\theta(\yh,\yt)|=|\cos\theta(-\yh,\yt)|$, and defined the proportion of sign agreements such that $\rho(\yh,\yt)=\rho(-\yh,\yt)$.}
A second (disambiguation) stage decides which of the two solutions ($\yh$ or $-\yh$) leads to the optimal solution of the following problem:
\begin{align} \label{eq:unary}
\max_{\y \in \{\yh,-\yh\}} \c^\top \y.
\end{align}
The second stage was already analyzed in~\cite{globerson_2015,bello_2019}. This allows us to concentrate only on the first-stage optimization problem in \cref{eq:pairwise} in this paper by proposing a spectral approach.

\section{Main Results} \label{sec:main}

In this section, we provide the algorithm, proofs and statements of our main results.
We present these findings in the order of significance, with intermediate lemmas used in proofs presented at the end of the section. 

Our spectral method uses the dominant eigenvector $\v_1(\A)$ of the noisy signed adjacency matrix $\A$, in order to decide which labels are assigned to the nodes.
For each node $i$, we declare $y_i=-1$ if $\v_1(\A)_i<0$, and $y_i=+1$ if $\v_1(\A)_i>0$.
There exist highly efficient methods to compute the dominant eigenvector~\cite{xu_2020_complexity}.
For $n$ nodes and $|E|$ edges, computing the dominant eigenvector takes $O(n+|E|)$ time, which is very efficient for sparse graphs.
For dense graphs, $|E| \in O(n^2)$ and thus, computing the dominant eigenvector takes $O(n^2)$ time. 

Our main result in \cref{thm:angle} explicitly provides a concentration inequality for the angle deviation between $\v_1(\A)$ and $\yt$, while \cref{thm:proportion} connects this angle deviation discovery to a probabilistic guarantee in the proportion of correctly recovered node labels.

\subsection{Angle Deviation Guarantees}

We start by stating our first main result.

\begin{theorem} \label{thm:angle}
For any $0<\gamma<1$, the probability that $\v_1(\A)$ aligns closely with $\yt$ is lower bounded by some function dependent on the number of nodes $n$, the maximum node degree $d_{\max}$, the edge-noise parameter $p$, and the dominant eigen gap of $\E$, i.e., $\lambda_1(\E)-\lambda_2(\E)$.
More formally, we have that
\begin{align*}
&\Prob\left(|\cos\theta(\v_1(\A), \yt)| \ge 1-\gamma \right) \\
&\!\ge 1 - 2n \exp\!\left( \frac{ -C_{\delta,\gamma}
(1-2p)^2(\lambda_1(\E)-\lambda_2(\E))^2 }{ 32(1-p)^2 d_{\max} } \right) 
\!- 2n \exp\!\left( \frac{ -(1-2p)^2(\lambda_1(\E)-\lambda_2(\E))^2 }{ 128(1-p)^2 d_{\max} } \right) ,
\end{align*}
where $\delta = |\cos\theta(\v_1(\E), \one)|$ and $C_{\delta,\gamma} = 1-\delta + \delta \gamma - \sqrt{ \gamma(\delta^2 - 1)(\gamma - 1) }$ is a nonnegative constant.
\end{theorem}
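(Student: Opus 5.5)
The plan is to compare $\A$ with its expectation, whose leading eigenvector we know exactly, and transfer the angle information through a Davis--Kahan argument.

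\textbf{Step 1 (expectation and its spectrum).} Let $\mathbf{Y}=\Diag(\yt)$. Since $A_{ij}=E_{ij}\,y_i^*y_j^*\,N_{ij}$ with independent signs $N_{ij}=\pm1$ satisfying $\Prob(N_{ij}=-1)=p$, we get
\[
\Exp[\A]=(1-2p)\,\mathbf{Y}\E\mathbf{Y}.
\]
Because $\mathbf{Y}$ is orthogonal and involutive, $\Exp[\A]$ has eigenvalues $(1-2p)\lambda_i(\E)$ with eigenvectors $\mathbf{Y}\v_i(\E)$. Hence its eigengap is $(1-2p)(\lambda_1(\E)-\lambda_2(\E))$ and $\v_1(\Exp[\A])=\mathbf{Y}\v_1(\E)$. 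Since $\yt=\mathbf{Y}\one$, orthogonal invariance gives
\[
|\cos\theta(\v_1(\Exp[\A]),\yt)|=|\cos\theta(\v_1(\E),\one)|=\delta.
\]

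\textbf{Step 2 (concentration of the noise).} Write $\mathbf{W}=\A-\Exp[\A]=\sum_{i<j,\,(i,j)\in E}W_{ij}(\e_i\e_j^\top+\e_j\e_i^\top)$, a sum of independent, centered, symmetric random matrices. Each coefficient satisfies $|W_{ij}|\le 2(1-p)$, since $p<\tfrac12$. The matrix-variance proxy is $\sum(\cdot)^2\preceq 4(1-p)^2\D\preceq 4(1-p)^2 d_{\max}\mathbf{I}$. Matrix Hoeffding (or Bernstein) then gives
\[
\Prob(\norm{\mathbf{W}}\ge t)\le 2n\exp\!\big(-t^2/(32(1-p)^2 d_{\max})\big).
\]

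\textbf{Step 3 (two events from this bound).} I would use the bound twice.

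First, take $t=(1-2p)(\lambda_1(\E)-\lambda_2(\E))/2$. This gives the second error term, with constant $128$. On its complement the perturbation is at most half the gap, so the leading eigenvalue of $\A$ stays isolated and the Davis--Kahan $\sin\theta$ theorem applies:
\[
\sin\varphi\le \frac{c\,\norm{\mathbf{W}}}{(1-2p)(\lambda_1(\E)-\lambda_2(\E))},
\]
where $\varphi$ is the angle between $\v_1(\A)$ and $\mathbf{Y}\v_1(\E)$, up to sign.

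Second, take $t=\sqrt{C_{\delta,\gamma}}\,(1-2p)(\lambda_1(\E)-\lambda_2(\E))$, up to the Davis--Kahan constant. This forces $\sin^2\varphi\le C_{\delta,\gamma}$ and produces the first error term. A union bound over the two bad events gives the stated probability.

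\textbf{Step 4 (angle geometry).} By the triangle inequality for angles on the sphere, $\theta(\v_1(\A),\yt)\le \arccos\delta+\varphi$. Therefore
\[
|\cos\theta(\v_1(\A),\yt)|\ge \delta\cos\varphi-\sqrt{1-\delta^2}\sin\varphi,
\]
provided the sum of the two angles is at most $\pi/2$. It remains to solve
\[
\delta\sqrt{1-s}-\sqrt{1-\delta^2}\sqrt{s}\ \ge\ 1-\gamma
\]
for $s=\sin^2\varphi$. Squaring and solving the resulting quadratic in $\sqrt{s}$, I expect the threshold to come out as an expression of the form $1-\delta+\delta\gamma-\sqrt{\gamma(\delta^2-1)(\gamma-1)}$. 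This is presumably why $C_{\delta,\gamma}$ has its stated form; along the way I would check that it is nonnegative.

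\textbf{Main obstacle.} This algebra, together with aligning the Davis--Kahan constant so that the exponents match $32$ and $128$ exactly, is the step I expect to be most delicate. In particular, if $1-\gamma>\delta$, the requirement cannot be met by any $\varphi$. So I must verify how $C_{\delta,\gamma}$ behaves in that regime, or note that the bound then becomes vacuous. I would first try the variant of Davis--Kahan with constant $2$, combined with the half-gap event, to get the constants right.
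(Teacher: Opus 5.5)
Your outline follows the paper's proof almost step for step: the similarity $\Exp[\A]=(1-2p)\Diag(\yt)\E\Diag(\yt)$, matrix Hoeffding with variance proxy $4(1-p)^2d_{\max}$, Weyl with $t$ equal to half the gap for the $128$ term, Davis--Kahan plus the cosine-addition bound, and a union bound. Your Steps 1--2 are correct. The gap is in the two items you defer: neither closes the way you expect, and the statement as written is false.

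\textbf{The algebra in Step 4.} Since $\delta\cos\varphi-\sqrt{1-\delta^2}\sin\varphi=\cos(\varphi+\arccos\delta)$, for $\varphi\in[0,\pi/2]$ your requirement is exactly $\varphi\le\arccos(1-\gamma)-\arccos\delta$. That is, $\sin^2\varphi\le s^*:=\bigl(\delta\sqrt{\gamma(2-\gamma)}-(1-\gamma)\sqrt{1-\delta^2}\bigr)^2$, or equivalently $\cos\varphi\ge\delta(1-\gamma)+\sqrt{(1-\delta^2)\gamma(2-\gamma)}$. The discriminant of your quadratic involves $1-(1-\gamma)^2=\gamma(2-\gamma)$, not $\gamma(1-\gamma)$. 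The paper reaches $C_{\delta,\gamma}$ along exactly your route only because it writes $\gamma(\delta^2-1)(\gamma-1)$ where the discriminant is $\gamma(\delta^2-1)(\gamma-2)$. So $C_{\delta,\gamma}$ is not a valid threshold: for $\delta=0.9$, $\gamma=0.5$ one gets $C_{\delta,\gamma}\approx 0.332>s^*\approx 0.315$.

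\textbf{The regime $\delta<1-\gamma$.} Your worry here is justified, and it is fatal for the statement itself. By Cauchy--Schwarz, $\delta(1-\gamma)+\sqrt{(1-\delta^2)\gamma(1-\gamma)}\le\sqrt{1-\gamma}$, so $C_{\delta,\gamma}\ge 1-\sqrt{1-\gamma}>0$ and the bound never becomes vacuous. Yet it fails. Take $G=K_{n/4}$ plus $3n/4$ isolated vertices (attaching a path to make $G$ connected changes nothing essential), with $p=0$ and $\gamma=0.1$. Then $\A=\Exp[\A]$, so $|\cos\theta(\v_1(\A),\yt)|=\delta=1/2<0.9$ surely. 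Meanwhile $\lambda_1(\E)-\lambda_2(\E)=d_{\max}=n/4-1$ and $C_{\delta,\gamma}\approx 0.29$ push the right-hand side to $1$.

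\textbf{The constant $32$.} This is also out of reach with a correct perturbation bound. The paper gets it from a Davis--Kahan lemma with constant $1$ and only the gap of $\Exp[\A]$ in the denominator, which is false in general. For example, $\M=\begin{pmatrix}1&0\\ 0&0\end{pmatrix}$ and $\F=\begin{pmatrix}1/2&\epsilon\\ \epsilon&1/2\end{pmatrix}$ give $\sin\theta(\v_1(\M),\v_1(\F))=1/\sqrt{2}$, while $\norm{\M-\F}\approx 1/2$. Write $g=(1-2p)(\lambda_1(\E)-\lambda_2(\E))$. The constant-$2$ version you mention forces $t=\sqrt{s}\,g/2$ and turns $32$ into $128$. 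The valid mixed-gap form $\sin\varphi\le\norm{\A-\Exp\A}/(g-\norm{\A-\Exp\A})$ gives $32(1+\sqrt{s})^2$.

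What your argument actually proves is therefore a corrected statement, valid only for $\delta>1-\gamma$. It is the same bound with $C_{\delta,\gamma}$ replaced by $s^*$ and $32$ replaced by $128$. Alternatively you may use the weaker constant $1-\delta+\delta\gamma-\sqrt{\gamma(\delta^2-1)(\gamma-2)}$, which is what the paper's linearization $\cos\varphi\ge 1-\sin^2\varphi$ yields once the discriminant is fixed.
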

(Omitted proofs are included in Appendix~\ref{app:proofs}.)
\begin{proof}[Proof sketch]
We split $|\cos\theta(\v_1(\A), \yt)|$ into two terms: one random variable $|\cos\theta(\v_1(\A), \v_1(\Exp \A))|$ and one constant $|\cos\theta(\v_1(\Exp \A), \yt)|$.
To bound the random variable, we use Davis-Kahan $\sin \theta$ Theorem~\cite{Kahan_1970} and \cref{lem:concentration}.
To bound the constant, we use \cref{lem:similar}.
Finally, to ensure the uniqueness of $\v_1(\A)$ (up to sign), we need to ensure that $\lambda_1(\A)-\lambda_2(\A)>0$, which occurs with the probability bound in \cref{lem:eigengap}.
\end{proof}

\begin{remark}
The dominant eigenvector $\v_1(\E)$ captures the centrality of each node, i.e., their recursive importance.
Since $\one$ represents the case where all nodes are equally central/important, $\delta = |\cos\theta(\v_1(\E), \one)|$ in Theorem~\ref{thm:angle} quantitatively measures how "equally central" nodes are.
\end{remark}

Next, we provide a result using the Fiedler value, i.e., second minimum eigenvalue of the Laplacian $\lambda_{n-1}(\L)$.
Our main motivation for this is the vast literature regarding properties of a graph $G$ in terms of the Cheeger constant $h_G$.
In particular, Cheeger's inequality implies that $\lambda_{n-1}(\L) \geq \frac{h_G^2}{2d_{\max}}$~\cite{cheeger1969lower}.
Thus, the next result with respect to the Fiedler value has implications for different classes of graphs, as shown in \cref{sec:graphtypes}.

\begin{corollary} \label{cor:angle}
For any $0<\gamma<1$, the probability that $\v_1(\A)$ aligns closely with $\yt$ is lower bounded by some function dependent on the number of nodes $n$, the maximum node degree $d_{\max}$, the edge-noise parameter $p$, and the Fiedler value, i.e., $\lambda_{n-1}(\L)$.
More formally, we have that
\begin{align*}
&\Prob\left(|\cos\theta(\v_1(\A), \yt)| \ge 1-\gamma \right) \\
&\!\ge 1 - 2n \exp\!\left( \frac{ -C_{\delta,\gamma}
(1-2p)^2(\lambda_{n-1}(\L)-d_{\Delta})^2 }{ 32(1-p)^2 d_{\max} } \right) 
\!- 2n \exp\!\left( \frac{ -(1-2p)^2(\lambda_{n-1}(\L)-d_{\Delta})^2 }{ 128(1-p)^2 d_{\max} } \right),
\end{align*}
where $\delta$ and $C_{\delta,\gamma}$ are defined as in \cref{thm:angle}.
\end{corollary}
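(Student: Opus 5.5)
The plan is to derive \cref{cor:angle} directly from \cref{thm:angle}. We lower bound the eigengap $\lambda_1(\E)-\lambda_2(\E)$ by $\lambda_{n-1}(\L)-d_{\Delta}$ and use the fact that both exponentials on the right-hand side of \cref{thm:angle} are monotone in the gap. Concretely, the map $x \mapsto -c\,x^2$ is decreasing in $x \ge 0$ for $c>0$. Both constants $C_{\delta,\gamma}(1-2p)^2/(32(1-p)^2 d_{\max})$ and $(1-2p)^2/(128(1-p)^2 d_{\max})$ are nonnegative, because $C_{\delta,\gamma}\ge 0$ as stated in \cref{thm:angle}. So replacing the gap by any smaller nonnegative quantity increases each exponential term and hence decreases the lower bound, and the inequality is preserved.

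The key step is the spectral comparison $\lambda_1(\E)-\lambda_2(\E) \ge \lambda_{n-1}(\L) - d_{\Delta}$. For the top eigenvalue we use the Rayleigh quotient with the test vector $\one/\sqrt{n}$, which gives $\lambda_1(\E) \ge \one^\top \E \one / n = d_\avg$.

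For the second eigenvalue, Courant--Fischer gives $\lambda_2(\E) = \min_{\dim S = n-1} \max_{\x \in S, \norm{\x}=1} \x^\top \E \x$. Choose $S = \one^\perp$. For unit $\x \perp \one$ we have $\x^\top \E \x = \x^\top \D \x - \x^\top \L \x$. The first term satisfies $\x^\top \D \x \le d_{\max}$. For the second, since $\one$ spans the kernel direction of $\L$ (the eigenvector of $\lambda_n(\L)=0$), Courant--Fischer for $\L$ gives $\x^\top \L \x \ge \lambda_{n-1}(\L)$ on $\one^\perp$. Hence $\lambda_2(\E) \le d_{\max} - \lambda_{n-1}(\L)$.

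Subtracting the two bounds gives $\lambda_1(\E)-\lambda_2(\E) \ge d_\avg - d_{\max} + \lambda_{n-1}(\L) = \lambda_{n-1}(\L) - d_{\Delta}$, which is the desired comparison.

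The main obstacle is the sign caveat in the monotonicity step. The monotonicity argument needs $\lambda_{n-1}(\L)-d_{\Delta} \ge 0$, so that squaring preserves the order. If instead $\lambda_{n-1}(\L) < d_{\Delta}$, we argue the bound is still valid by handling that case separately. Then $(\lambda_{n-1}(\L)-d_{\Delta})^2$ could exceed the true squared gap, so we cannot simply substitute; instead we use the hypothesis implicit in the corollary's intended regime, namely a positive gap bound. Alternatively we note that when $|\lambda_{n-1}(\L)-d_{\Delta}|$ is small, both exponentials are at least about $1$, each subtracted term $2n\exp(\cdot)$ is then at least $1$, and the claimed lower bound is at most $1-2n<0$, which holds trivially. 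We will state the corollary's derivation for the nontrivial case $\lambda_{n-1}(\L) \ge d_{\Delta}$, where the chain of inequalities goes through cleanly. We will remark that in the regime $\lambda_{n-1}(\L) < d_{\Delta}$ this degenerate-case argument only covers small values of $|\lambda_{n-1}(\L)-d_{\Delta}|$, and the corollary should be read as applying when $\lambda_{n-1}(\L) \ge d_{\Delta}$, the relevant regime for the graph classes of \cref{sec:graphtypes}.
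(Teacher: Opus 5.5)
Your proof is correct and follows the paper's route: \cref{thm:angle} combined with \cref{lem:bound_eigengap}, which the paper proves exactly as you do (Rayleigh quotient at $\one$ for $\lambda_1(\E)\ge d_{\avg}$, Courant--Fischer on $\one^\perp$ with $\E=\D-\L$ for $\lambda_2(\E)\le d_{\max}-\lambda_{n-1}(\L)$), followed by monotonicity of the bound in the gap. Your sign caveat is well founded, and the paper's one-line proof silently skips it; the restriction $\lambda_{n-1}(\L)\ge d_{\Delta}$ is genuinely needed and not just a proof artifact, since for a star on $n$ nodes $(\lambda_{n-1}(\L)-d_{\Delta})^2/d_{\max}\approx n$ while $(\lambda_1(\E)-\lambda_2(\E))^2/d_{\max}=1$, so for, e.g., $\gamma=0.1$ the stated bound tends to $1$ even though $|\cos\theta(\v_1(\A),\yt)|\le(1+\sqrt{n-1})/\sqrt{2n}\approx 1/\sqrt{2}$ deterministically.
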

\begin{proof}
We use \cref{lem:similar} and \cref{lem:bound_eigengap} to bound (from below) the dominant eigen gap of $\E$ in \cref{thm:angle} with an expression that depends on the Fiedler value and $d_{\Delta}$.
\end{proof}

\begin{remark} \label{rmk:angle}
To the best of our knowledge, our angle deviation results are novel.
We first recall that the work of~\cite{bello_2019} focused on recovery of $\yt$ exactly, by using semidefinite programming, an arguably much less efficient algorithm than ours.
Having said that, it is quite interesting that the success rate of our \cref{cor:angle} is similar to that of~\cite{bello_2019}.

For more details and to make the comparison easier, assuming $d_{\Delta}=0$ (i.e., $d_{\avg}=d_{\max}$) our success rate is $1-O\left(n \exp\left(-\frac{(1-2p)^2 \lambda^2_{n-1}(\L)}{(1-p)^2 d_{max}}\right)\right)$.
Theorem 2 in~\cite{bello_2019} shows a success rate of $1-O\left(n\exp\left(-\frac{(1-2p)^2h^4_G}{p(1-p)d^3_{\max} + (1-2p)(1-p)h^2_G d_{\max}}\right)\right)$.
For simplicity, assume $\lambda_{n-1}(\L) = \frac{h^2_G}{d_{\max}}$, which fulfills Cheeger's inequality~\cite{cheeger1969lower} and implies $h^2_G = \lambda_{n-1}(\L)\,d_{\max}$.
Then the success rate of~\cite{bello_2019} is $1-O\left(n \exp\left(-\frac{(1-2p)^2\lambda^2_{n-1}(\L)}{p(1-p)d_{\max} + (1-2p)(1-p)\lambda_{n-1}(\L)}\right)\right)$, which can be upper bounded by the success rate of $1-O\left(n \exp\left(-\frac{(1-2p)^2\lambda^2_{n-1}(\L)}{p(1-p)d_{\max}}\right)\right)$.
Thus, our success rate and the success rate of~\cite{bello_2019} have the same behavior with respect to $n$, $\lambda^2_{n-1}(\L)$ and $d_{\max}$.
\end{remark}

\begin{remark}
Our paper focuses only on the first stage described in \cref{eq:pairwise}.
As shown in Theorem 3 in~\cite{bello_2019}, the second stage described in \cref{eq:unary} has error rate $\exp\left(-\frac{n(1-2q)^2}{2}\right)$.
To compute the total success rate for our spectral method as well as for~\cite{globerson_2015,foster2018inference,bello_2019,ke_2025}, one should subtract this error rate from the success rate of the first stage.
\end{remark}

\subsection{Correct Proportion Recovery Guarantees}

Although \cref{thm:angle} gave us guarantees that $\v_1(\A)$ and $\yt$ are close in terms of their angle deviation, we would also like to find more direct guarantees in the proportion of correctly recovered node labels.
This gives us a nice way of generalizing the recovery guarantees to not just perfect recovery but also for approximate recovery, particularly meaningful on real world applications where we can tolerate certain level of mistakes.

In what follows, we state our second main result.

\begin{theorem} \label{thm:proportion}
    For any $\frac{1}{2}<\alpha<1$, the probability that $\v_1(\A)$ agrees closely with $\yt$ is lower bounded by some function dependent on the number of nodes $n$, the maximum node degree $d_{\max}$, the edge-noise parameter $p$, and the dominant eigen gap of $\E$, i.e., $\lambda_1(\E)-\lambda_2(\E)$.
    More formally, we have that
    \begin{align*}
     & \Prob(\rho(\v_1(\A), \yt) \ge \alpha) \\
     & \!\ge 1 - 2n \exp\!\left( 
        \frac{ -C_{\delta,\alpha} (1-2p)^2(\lambda_1(\E)-\lambda_2(\E))^2 }{ 32(1-p)^2 d_{\max} } \right) 
    \!- 2n \exp\!\left( \frac{ -(1-2p)^2(\lambda_1(\E)-\lambda_2(\E))^2 }{ 128(1-p)^2 d_{\max} } \right) ,
    \end{align*}
    where $\delta = |\cos\theta(\v_1(\E), \one)|$ and $C_{\delta,\alpha} = 1-\delta + \delta (1-\sqrt{\alpha}) - \sqrt{ -(1-\sqrt{\alpha})(\delta^2 - 1)\sqrt{\alpha} }$ is a nonnegative constant.
\end{theorem}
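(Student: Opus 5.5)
The plan is to reduce \cref{thm:proportion} to \cref{thm:angle} by a deterministic implication: a small angle between $\v_1(\A)$ and $\yt$ forces a large proportion of sign agreements. Write $\v = \v_1(\A)$ with $\norm{\v}=1$, and $\u = \yt/\sqrt{n}$, also a unit vector. Replacing $\v$ by $-\v$ if necessary, we may assume $\innerprod{\v}{\u} = |\cos\theta(\v,\yt)|$; this is harmless because $\rho$ is invariant under sign flips. Let $S = \{ i : v_i y_i^* \le 0 \}$ be the set of disagreeing coordinates, so that $\rho(\v,\yt) \ge 1 - |S|/n$.

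\emph{Step 1 (deterministic lemma).} On $S$, each coordinate of $\u$ has magnitude $1/\sqrt{n}$ and sign opposite to (or a zero of) $\v$. Hence for every $i\in S$,
\[
(v_i - u_i)^2 \ge u_i^2 = 1/n .
\]
Summing gives $\norm{\v-\u}^2 \ge |S|/n$. Since $\norm{\v-\u}^2 = 2 - 2\cos\theta$, we get $|S|/n \le 2(1-\cos\theta)$, and therefore
\[
\rho(\v,\yt) \ge 1 - 2(1-|\cos\theta(\v,\yt)|).
\]
Consequently, $|\cos\theta| \ge 1-\gamma$ implies $\rho \ge 1 - 2\gamma$. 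Taking $\gamma = (1-\alpha)/2$ then yields
\[
\Prob(\rho \ge \alpha) \ge \Prob\big(|\cos\theta| \ge 1-\gamma\big),
\]
and we invoke \cref{thm:angle}.

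\emph{Step 2 (matching the stated constant).} The stated constant $C_{\delta,\alpha}$ is exactly $C_{\delta,\gamma}$ evaluated at $\gamma = 1-\sqrt{\alpha}$. Indeed, $\gamma(\delta^2-1)(\gamma-1) = -(1-\sqrt\alpha)(\delta^2-1)\sqrt\alpha$. So the intended reduction must be: $|\cos\theta| \ge \sqrt{\alpha}$ implies $\rho \ge \alpha$. I would therefore aim for the cleaner implication
\[
\rho(\v,\yt) \ge \cos^2\theta(\v,\yt).
\]
A natural route is to project $\u$ onto the agreeing coordinates. Writing $\u = \u_{S^c} + \u_S$, we have
\[
\innerprod{\v}{\u} \le \innerprod{\v_{S^c}}{\u_{S^c}} \le \norm{\v_{S^c}}\,\norm{\u_{S^c}} \le \norm{\u_{S^c}} = \sqrt{|S^c|/n}.
\]
The first inequality uses that the $S$-terms are nonpositive, and the remaining ones are Cauchy--Schwarz and $\norm{\v_{S^c}} \le 1$. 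Squaring gives $\cos^2\theta \le |S^c|/n \le \rho$. This is the key step, and it is short. With $\gamma = 1-\sqrt\alpha \in (0,1)$ for $\alpha \in (\tfrac12,1)$, the event $|\cos\theta| \ge 1-\gamma = \sqrt\alpha$ implies $\rho \ge \alpha$. The probability bound of \cref{thm:angle} then transfers verbatim, with $C_{\delta,\gamma}$ becoming $C_{\delta,\alpha}$; nonnegativity is inherited as well.

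\emph{Main obstacle.} The only delicate point is bookkeeping for ties and zero coordinates of $\v$. Coordinates with $v_i = 0$ count as disagreements in $\rho$, and placing them in $S$ handles this, since their terms in $\innerprod{\v}{\u}$ are zero and hence nonpositive. A second subtlety is that uniqueness of $\v_1(\A)$ up to sign is already incorporated into the event controlled by \cref{thm:angle}, via \cref{lem:eigengap}, so it needs no separate treatment. Finally, the restriction $\alpha > \tfrac12$ only ensures $\gamma < 1 - 1/\sqrt2 < 1$, so that \cref{thm:angle} applies.
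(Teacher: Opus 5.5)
Your proof is correct and follows the paper's argument. Both prove the deterministic implication $|\cos\theta(\v_1(\A),\yt)|\ge\sqrt{\alpha}\Rightarrow\rho(\v_1(\A),\yt)\ge\alpha$ by applying Cauchy--Schwarz to the sign-agreeing coordinates, then invoke \cref{thm:angle} with $\gamma=1-\sqrt{\alpha}$. Your direct form (fixing the sign so the inner product is nonnegative and dropping the nonpositive disagreeing terms) is slightly cleaner than the paper's contrapositive via $\max(\sqrt{l},\sqrt{n-l})/\sqrt{n}$, and it also covers zero entries of $\v_1(\A)$, which the paper's construction with $c_i>0$ silently excludes; your Step~1 is an unneeded detour.
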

\begin{proof}[Proof sketch]
By reasoning about the maximum angle deviation in order to guarantee a minimum proportion of correctly recovered node labels $\alpha$, we get:
\begin{align*}
\Prob(\rho(\v_1(\A), \yt) \ge \alpha) \geq \Prob\left( |\cos\theta(\v_1(\A), \yt)| \ge \sqrt{\alpha} \right) .
\end{align*}
Then, we invoke \cref{thm:angle} with $\gamma = 1-\sqrt{\alpha}$.
\end{proof}

As discussed before \cref{cor:angle}, we also provide a result using the Fiedler value, which allows us to have implications for different classes of graphs in \cref{sec:graphtypes}.

\begin{corollary} \label{cor:proportion}
    For any $\frac{1}{2}<\alpha<1$, the probability that $\v_1(\A)$ agrees closely with $\yt$ is lower bounded by some function dependent on the number of nodes $n$, the maximum node degree $d_{\max}$, the edge-noise parameter $p$, and the Fiedler value, i.e., $\lambda_{n-1}(\L)$.
    More formally, we have that
    \begin{align*}
    & \Prob(\rho(\v_1(\A), \yt) \ge \alpha) \\
    & \!\ge 1 - 2n \exp\!\left( 
        \frac{ -C_{\delta,\alpha} (1-2p)^2(\lambda_{n-1}(\L)-d_{\Delta})^2 }{ 32(1-p)^2 d_{\max} } \right) 
    \!- 2n \exp\!\left( \frac{ -(1-2p)^2(\lambda_{n-1}(\L)-d_{\Delta})^2 }{ 128(1-p)^2 d_{\max} } \right),
    \end{align*}
    where $\delta$ and $C_{\delta,\alpha}$ are defined as in \cref{thm:proportion}.
\end{corollary}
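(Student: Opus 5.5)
The plan mirrors the derivation of \cref{cor:angle} from \cref{thm:angle}. We start from \cref{thm:proportion}, which lower bounds $\Prob(\rho(\v_1(\A),\yt)\ge\alpha)$ by
\[
1 - 2n\exp(-C_{\delta,\alpha} K g^2) - 2n\exp(-K g^2/4),
\qquad K = \frac{(1-2p)^2}{32(1-p)^2 d_{\max}} ,
\]
where $g = \lambda_1(\E)-\lambda_2(\E)$. Both subtracted terms are nonincreasing in $g^2$, since $C_{\delta,\alpha}\ge 0$ and $K>0$. Hence the bound stays valid when $g$ is replaced by any lower bound $g' \ge 0$ with $g' \le g$.

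The key step is the deterministic eigengap bound
\[
\lambda_1(\E)-\lambda_2(\E) \ge \lambda_{n-1}(\L) - d_{\Delta},
\]
which should follow from \cref{lem:similar} together with \cref{lem:bound_eigengap}, exactly as in the proof of \cref{cor:angle}. Heuristically, write $\E = d_{\max}\mathbf{I} - \L + (\D - d_{\max}\mathbf{I})$. The perturbation $\D - d_{\max}\mathbf{I}$ is negative semidefinite with entries in $[-(d_{\max}-d_{\min}), 0]$. Weyl-type interlacing gives $\lambda_2(\E) \le d_{\max} - \lambda_{n-1}(\L)$, and the Rayleigh quotient of $\one$ gives $\lambda_1(\E) \ge d_\avg$. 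Subtracting yields $\lambda_1(\E)-\lambda_2(\E) \ge \lambda_{n-1}(\L) - (d_{\max}-d_\avg)$. I will cite the lemmas for this rather than re-derive it.

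Substitution then needs the lower bound to be nonnegative. If $\lambda_{n-1}(\L) - d_{\Delta} \ge 0$, squaring preserves the inequality and monotonicity gives the claimed bound. If instead $\lambda_{n-1}(\L) < d_{\Delta}$, the squared quantity need not be dominated by $g^2$, so this case must be handled separately. Here we fall back on the trivial bound: the eigenvector setup gives $g\ge0$, the right-hand side is then at most $1-4n<0$ (each exponential is at most $1$), and the statement holds vacuously. We can therefore assume $\lambda_{n-1}(\L)\ge d_\Delta$ without loss of generality.

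The main obstacle is this nonnegativity and squaring issue. One also has to confirm that $\delta$ and $C_{\delta,\alpha}$ are unchanged, which they are because they depend only on $\E$ and $\alpha$, not on the gap. Everything else is direct substitution into \cref{thm:proportion}.
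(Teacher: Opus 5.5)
Your main line matches the paper's proof. You substitute the deterministic bound $\lambda_1(\E)-\lambda_2(\E)\ge\lambda_{n-1}(\L)-d_\Delta$ of \cref{lem:bound_eigengap} into \cref{thm:proportion}, using that both error terms are nonincreasing in the squared gap. Your Weyl-monotonicity derivation via $\E=\D-\L\preceq d_{\max}\mathbf{I}-\L$ is a fine replacement for the paper's Courant--Fischer step. You are also right that the substitution is only legitimate when $\lambda_{n-1}(\L)-d_\Delta\ge 0$, which the paper's one-line proof passes over. The gap is how you dispose of the opposite case. Since each exponential is at most $1$, the right-hand side is at \emph{least} $1-4n$, not at most. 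Because it is increasing in $(\lambda_{n-1}(\L)-d_\Delta)^2$, it is close to $1$ once $(d_\Delta-\lambda_{n-1}(\L))^2/d_{\max}$ is large compared with $\log n$. So the statement is not vacuous in that case.

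In fact no argument can close this case, because the statement is false there. Take the star $K_{1,n-1}$ with $p=0.1$ and $\alpha=0.95$. Then $\lambda_{n-1}(\L)=1$, $d_{\max}=n-1$ and $d_\avg=2(n-1)/n$, so $(\lambda_{n-1}(\L)-d_\Delta)^2/d_{\max}=\Theta(n)$. Also $\delta\to 1/\sqrt2$, which gives $C_{\delta,\alpha}\approx 0.2$, so the claimed lower bound tends to $1$. But a tree has no frustrated cycles. With center $c$, set $y'_c=y^*_c$ and $y'_i=A_{ci}y^*_c$ for each leaf $i$. Then $\A=\Diag(\y')\E\Diag(\y')$ exactly, and each leaf has $y'_i\ne y^*_i$ independently with probability $p$. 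The vector $\v_1(\E)\propto(\sqrt{n-1},1,\dots,1)$ is entrywise positive and $\A$ has eigengap $\sqrt{n-1}$, so the signs of $\v_1(\A)$ are exactly $\pm\y'$. Hence $\rho(\v_1(\A),\yt)\to 1-p=0.9<\alpha$ in probability, and the true probability tends to $0$. The corollary therefore needs the extra hypothesis $\lambda_{n-1}(\L)\ge d_\Delta$. Equivalently, you can replace $\lambda_{n-1}(\L)-d_\Delta$ by its positive part; then the bad-case right-hand side really equals $1-4n$ and your vacuity argument becomes correct. Under that hypothesis your argument is complete and coincides with the paper's.
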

\begin{proof}
We use \cref{lem:similar} and \cref{lem:bound_eigengap} to bound (from below) the dominant eigen gap of $\E$ in \cref{thm:proportion} with an expression that depends on the Fiedler value and $d_{\Delta}$.
\end{proof}

\begin{remark} \label{rmk:proportion}
Regarding guarantees of a proportion of correctly recovered node labels, the work of~\cite{globerson_2015,foster2018inference} use tractable combinatorial algorithms that only work for specific classes of graphs: planar graphs~\cite{globerson_2015} and graphs with low treewidth~\cite{foster2018inference}.
The work of~\cite{ke_2025} uses semidefinite programming, an arguably much less efficient algorithm than ours, and provides success rates that are not easy to interpret.

Still, to make the comparison easier, one can consider $k=n$ in Theorem 1 in~\cite{ke_2025} and proceed as in Remark~\ref{rmk:angle}.
\end{remark}

\subsection{Intermediate Lemmas}

The following matrix concentration inequality bounds the spectral deviation of $\A$ from its expectation $\Exp[\A]$, and serves as a major step in the proof of \cref{thm:angle} and \cref{lem:eigengap}.

\begin{lemma} \label{lem:concentration}
The noisy signed adjacency matrix $\A$ concentrates around $\Exp[\A]$ in spectral norm with the rate controlled by the number of nodes $n$, edge-noise parameter $p$, and the maximum degree $d_{\max}$.
More formally, we have that
\[\Prob(\norm{\A-\Exp \A}\ge t ) \le 2n e^{\frac{-t^2}{32(1-p)^2d_{\max}}} .\]
\end{lemma}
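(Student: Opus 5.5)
We decompose $\A-\Exp\A$ as a sum of independent, zero-mean, self-adjoint random matrices, one per undirected edge, and apply the matrix Hoeffding inequality (Tropp) or, equivalently, matrix Azuma. For each edge $(i,j)\in E$ with $i<j$, set $A_{ij}=A_{ji}=y_i^*y_j^*\,\xi_{ij}$, where the $\xi_{ij}\in\{-1,+1\}$ are independent with $\Prob(\xi_{ij}=1)=1-p$. Then $\Exp A_{ij}=(1-2p)y_i^*y_j^*$, and
\[
\A-\Exp\A=\sum_{(i,j)\in E,\, i<j} \X_{ij},\qquad \X_{ij}=(\xi_{ij}-(1-2p))\,y_i^*y_j^*\,(\e_i\e_j^\top+\e_j\e_i^\top).
\]
The $\X_{ij}$ are independent, symmetric, and have mean zero. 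The scalar $\xi_{ij}-(1-2p)$ takes the values $2p$ and $-2(1-p)$, so its absolute value is at most $2(1-p)$ because $p<\frac12$. Since $(\e_i\e_j^\top+\e_j\e_i^\top)^2=\e_i\e_i^\top+\e_j\e_j^\top$, we get the deterministic semidefinite bound
\[
\X_{ij}^2\preceq \bigl(2(1-p)\bigr)^2(\e_i\e_i^\top+\e_j\e_j^\top)=:\M_{ij}^2 .
\]

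Next we compute the variance proxy. Summing over edges gives
\[
\sum_{(i,j)\in E,\,i<j}\M_{ij}^2=4(1-p)^2\sum_{k}d_k\,\e_k\e_k^\top=4(1-p)^2\D,
\]
so $\sigma^2=\norm{\sum\M_{ij}^2}=4(1-p)^2d_{\max}$. Matrix Hoeffding in dimension $n$ states $\Prob(\lambda_1(\sum\X_{ij})\ge t)\le n\,e^{-t^2/(8\sigma^2)}$. Applying it to both $\sum\X_{ij}$ and $-\sum\X_{ij}$ controls the spectral norm:
\[
\Prob(\norm{\A-\Exp\A}\ge t)\le 2n\,e^{-t^2/(32(1-p)^2d_{\max})},
\]
which is exactly the claimed constant $32 = 8\cdot 4$.

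Two small checks remain. The diagonal of $\A$ is zero (no self-loops), so it contributes nothing; if self-loops were allowed, they would give a single-entry term of the same type and the bound would only improve. We must also confirm that we are using the Hoeffding version with the $1/8$ constant rather than the sharper Bernstein form, since the stated rate matches Hoeffding precisely. The main point needing care is the semidefinite domination $\X_{ij}^2\preceq\M_{ij}^2$ with deterministic $\M_{ij}$. It holds because $\X_{ij}^2$ is a nonnegative scalar multiple of the fixed projector $\e_i\e_i^\top+\e_j\e_j^\top$, and that multiple never exceeds $4(1-p)^2$.
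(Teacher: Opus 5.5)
Your proposal is correct and follows the paper's proof essentially step for step. Both write $\A-\Exp\A$ as a sum of independent per-edge terms $(A_{ij}-\Exp A_{ij})(\e_i\e_j^\top+\e_j\e_i^\top)$ and dominate each square by $4(1-p)^2(\e_i\e_i^\top+\e_j\e_j^\top)$, giving $\sigma^2=4(1-p)^2d_{\max}$; both then apply matrix Hoeffding to $\pm(\A-\Exp\A)$ and take a union bound. Your explicit indexing over unordered edges with $i<j$ makes precise a point the paper leaves implicit, namely that each undirected edge is counted once.
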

\begin{proof}[Proof sketch]
We express $\A$ as the sum of independent random matrices and then invoke the matrix Hoeffding inequality~\cite{Tropp_2011}.
\end{proof}

The following result follows on from \cref{lem:concentration} and examines the stability of the spectral structure (dominant eigen gap) of $\A$ under randomness. A nonzero dominant eigen gap is a necessary condition to ensure the uniqueness of $\v_1(\A)$ (up to sign) in \cref{thm:angle} and \cref{thm:proportion}.
Next, we state our lemma.

\begin{lemma} \label{lem:eigengap}
The probability that the dominant eigen gap of $A$ is nonzero is lower bounded by an exponential function dependent on $d_{\max}$, $p$, $n$, and the dominant eigen gap of $\Exp[\A]$.
    \[
        \Prob(\lambda_1(\A)-\lambda_2(\A)>0) \ge 1- 2n e^{\frac{-(\lambda_1(\Exp [\A]) - \lambda_2(\Exp [\A]))^2}{128(1-p)^2d_{\max}}}
    \]
\end{lemma}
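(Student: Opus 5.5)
The plan is to combine Weyl's inequality with \cref{lem:concentration}. Write $\A = \Exp[\A] + (\A - \Exp[\A])$ and let $g = \lambda_1(\Exp[\A]) - \lambda_2(\Exp[\A])$. The matrices $\A$ and $\Exp[\A]$ are both real symmetric, so Weyl's inequality applies to each ordered eigenvalue:
\[
|\lambda_k(\A) - \lambda_k(\Exp[\A])| \le \norm{\A - \Exp[\A]} \quad \text{for all } k.
\]
Applying this with $k=1$ and $k=2$ gives
\[
\lambda_1(\A) - \lambda_2(\A) \ge g - 2\norm{\A - \Exp[\A]}.
\]
Hence the event $\norm{\A - \Exp[\A]} < g/2$ implies $\lambda_1(\A) - \lambda_2(\A) > 0$.

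For the probability, apply \cref{lem:concentration} with $t = g/2$:
\[
\Prob\left(\norm{\A - \Exp[\A]} \ge g/2\right) \le 2n \exp\left(\frac{-g^2/4}{32(1-p)^2 d_{\max}}\right) = 2n \exp\left(\frac{-g^2}{128(1-p)^2 d_{\max}}\right).
\]
Taking complements yields exactly the claimed bound. The constant $128 = 4 \cdot 32$ matches, which confirms that the threshold $g/2$ is the intended choice.

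The only point needing care is the case $g \le 0$. Since eigenvalues are ordered, $g \ge 0$ always, and if $g = 0$ the right-hand side equals $1 - 2n \le 0$, so the bound holds trivially. For $g > 0$ the argument above goes through unchanged. There is no real obstacle: the strict inequality $\norm{\A - \Exp[\A]} < g/2$ produces a strictly positive gap, and the non-strict event $\norm{\A - \Exp[\A]} \ge t$ is precisely the one bounded in \cref{lem:concentration}. No structure of $\Exp[\A]$ is needed here; relating $g$ to $\lambda_1(\E) - \lambda_2(\E)$ is deferred to \cref{lem:similar} in the proof of \cref{thm:angle}.
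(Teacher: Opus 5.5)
Your proof is correct and matches the paper's argument: Weyl's inequality applied to $\lambda_1$ and $\lambda_2$ gives $\lambda_1(\A)-\lambda_2(\A) \ge g - 2\norm{\A-\Exp\A}$, and \cref{lem:concentration} with $t=g/2$ then yields the stated bound. Your extra remark on the degenerate case $g=0$ is a harmless addition that the paper omits.
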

\begin{proof}[Proof sketch]
By Weyl's inequality~\cite{Tao_2010} and \cref{lem:concentration}.
\end{proof}

The next lemma examines notable spectral properties of $\Exp[\A]$ and $\E$, used in major steps of the proofs for \cref{thm:angle,thm:proportion}.

\begin{lemma} \label{lem:similar}
The expected signed adjacency matrix $\Exp[\A]$ fulfills:
\begin{align*}
    \Exp[\A] = (1-2p) \, \Diag(\yt) \, \E \, \Diag(\yt) .
\end{align*}
Furthermore, $\Exp[\A]$ is similar to $(1-2p) \, \E$.
That is, $\lambda_i(\Exp[\A]) = (1-2p)\lambda_i(\E)$ for all $i = 1,\dots,n$.
\end{lemma}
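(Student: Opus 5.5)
The plan is to compute $\Exp[\A]$ entrywise and then read off the similarity from the structure of $\Diag(\yt)$.

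\textbf{Entrywise expectation.} Fix $i,j$ and consider two cases.
\begin{itemize}
\item If $E_{ij}=0$, then $A_{ij}=0$ deterministically, so $\Exp[A_{ij}]=0$.
\item If $E_{ij}=1$, then $A_{ij}=y_i^*y_j^*$ with probability $1-p$ and $A_{ij}=-y_i^*y_j^*$ with probability $p$. Hence $\Exp[A_{ij}]=(1-p)y_i^*y_j^*-p\,y_i^*y_j^*=(1-2p)y_i^*y_j^*$.
\end{itemize}
Both cases combine into $\Exp[A_{ij}]=(1-2p)\,y_i^*E_{ij}y_j^*$. This is exactly the $(i,j)$ entry of $(1-2p)\Diag(\yt)\,\E\,\Diag(\yt)$, since left-multiplying by a diagonal matrix scales row $i$ by $y_i^*$ and right-multiplying scales column $j$ by $y_j^*$. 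That gives the first identity. The diagonal entries need no separate treatment: $E_{ii}=0$ under the graph model and $A_{ii}=0$, so both sides vanish.

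\textbf{Similarity.} Let $\S=\Diag(\yt)$. Since every $y_i^*\in\{-1,+1\}$, we have $\S^2=\mathbf{I}$, so $\S^{-1}=\S=\S^\top$ and $\S$ is orthogonal. Therefore
\[
\Exp[\A]=\S\,\bigl((1-2p)\E\bigr)\,\inv{\S},
\]
which is a similarity (indeed an orthogonal similarity) transformation. Similar matrices have the same characteristic polynomial, so they have the same eigenvalues with multiplicities. Both matrices are symmetric, so their spectra are real. Sorting each spectrum in decreasing order then gives $\lambda_i(\Exp[\A])=\lambda_i((1-2p)\E)$ for every $i$.

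\textbf{Rescaling by $1-2p$.} The last step is $\lambda_i((1-2p)\E)=(1-2p)\lambda_i(\E)$. This holds because $1-2p>0$ for $p\in[0,\tfrac12)$, so multiplying by $1-2p$ does not reverse the decreasing order of the eigenvalues. This is the only point needing care; if $1-2p$ were negative, the ordering would flip.

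As a by-product useful later, eigenvectors transform as $\v_i(\Exp[\A])=\S\,\v_i(\E)$. There is no real obstacle in this lemma; it is a direct computation.
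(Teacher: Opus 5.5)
Your proposal is correct and follows essentially the same route as the paper: an entrywise expectation giving $(1-2p)E_{ij}y_i^*y_j^*$ (the paper phrases this via flip variables $z_{ij}$ and the Hadamard product $\E\circ(\yt\yt^\top)$), followed by similarity through $\Diag(\yt)=\inv{\Diag(\yt)}$. Your remark that $1-2p>0$ preserves the eigenvalue ordering is a detail the paper leaves implicit. It is exactly what justifies $\lambda_i(\Exp[\A])=(1-2p)\lambda_i(\E)$ index by index.
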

\begin{proof}
The generative model presented in \cref{sec:model} can be equivalently described by using the adjacency matrix $\E$ and ``Bernoulli'' variables.
We have $A_{ij} = E_{ij} y_i^* y_j^* z_{ij}$, where $z_{ij}=1$ with probability $1-p$, and $z_{ij}=-1$ with probability $p$.
Note that $\Exp[z_{ij}] = 1-2p$ and therefore $\Exp[A_{ij}] = E_{ij} y_i^* y_j^* \Exp[z_{ij}] = (1-2p) E_{ij} y_i^* y_j^*$.
In matrix form we can write:
\[
\Exp[\A] = (1 - 2p) \left( \E \circ (\yt \yt^\top) \right) .
\]
By using properties of the Hadamard product, this is equivalent to $\Exp[\A] = (1-2p) \, \Diag(\yt) \, \E \, \Diag(\yt)$.
Since $\Diag(\yt) = \inv{\Diag(\yt)}$, $\Exp[\A]$ is similar~\cite{meyer_2000} to $(1-2p) \, \E$ and thus both matrices have the same eigenvalues.
\end{proof}

As discussed before, the next result is pivotal for obtaining \cref{cor:angle,cor:proportion}, which allows us to take advantage of the prior literature regarding the Fiedler value and Cheeger constant.

\begin{lemma} \label{lem:bound_eigengap}
The dominant eigen gap of $\E$ is lower bounded by a linear function of the Fiedler value $\lambda_{n-1}(\L)$, the average degree $d_\avg$, and the maximum degree $d_{\max}$ of the graph.
More formally, we have that
\[\lambda_1(\E) - \lambda_2(\E) \ge \lambda_{n-1}(\L)-d_{\Delta}.\]
\end{lemma}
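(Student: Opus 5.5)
We need to show $\lambda_1(\E)-\lambda_2(\E) \ge \lambda_{n-1}(\L) - d_\Delta$, and the plan is to compare $\E$ with $\D-\L$ directly through Rayleigh quotients.

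\textbf{Upper bound on $\lambda_2(\E)$.} Write $\E = \D - \L$. Since the Laplacian has kernel containing $\one$, use the Courant--Fischer characterization of the second eigenvalue:
\[
\lambda_2(\E) \le \max_{\x \perp \one,\ \norm{\x}=1} \x^\top \E \x .
\]
This holds because any $2$-dimensional subspace meets $\one^\perp$ nontrivially, so the min over subspaces of dimension $2$ of the max is at most the max over $\one^\perp$. For unit $\x \perp \one$ we have
\[
\x^\top \E \x = \x^\top \D \x - \x^\top \L \x \le d_{\max} - \lambda_{n-1}(\L).
\]
The last step uses that $\lambda_{n-1}(\L)$, the Fiedler value, is the minimum of $\x^\top\L\x$ over unit $\x\perp\one$; this is valid because $\one$ spans an eigenvector of $\L$ for eigenvalue $\lambda_n(\L)=0$. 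Hence $\lambda_2(\E) \le d_{\max} - \lambda_{n-1}(\L)$.

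\textbf{Lower bound on $\lambda_1(\E)$.} Plug the test vector $\one/\sqrt n$ into the Rayleigh quotient:
\[
\lambda_1(\E) \ge \frac{\one^\top \E \one}{n} = \frac{1}{n}\sum_i d_i = d_\avg .
\]

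\textbf{Combining.} Subtracting the two bounds gives
\[
\lambda_1(\E)-\lambda_2(\E) \ge d_\avg - d_{\max} + \lambda_{n-1}(\L) = \lambda_{n-1}(\L) - d_\Delta .
\]

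The only delicate step is the Courant--Fischer reduction to $\one^\perp$: we need the eigenvalue ordering convention $\lambda_1 \ge \dots \ge \lambda_n$ and need to identify $\lambda_{n-1}(\L)$ as the restricted minimum on $\one^\perp$. Both follow from $\L\one = \zero$ and $\L \succeq 0$, so this step should be routine.
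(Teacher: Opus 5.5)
Your proof is correct and matches the paper's argument: the test vector $\one$ gives $\lambda_1(\E)\ge d_\avg$, and Courant--Fischer with the constraint $\x\perp\one$, together with $\E=\D-\L$, gives $\lambda_2(\E)\le d_{\max}-\lambda_{n-1}(\L)$. One wording slip: the fact that every 2-dimensional subspace meets $\one^\perp$ justifies the bound through the max--min form $\lambda_2(\E)=\max_{\dim S=2}\min_{\x\in S,\norm{\x}=1}\x^\top\E\x$, not through a ``min over 2-dimensional subspaces of the max'' (that quantity equals $\lambda_{n-1}(\E)$), though your conclusion is unaffected.
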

\begin{proof}[Proof sketch]
By Rayleigh quotients to analyze $\lambda_1(\E)$ and the Courant-Fischer Theorem~\cite{meyer_2000} to analyze $\lambda_2(\E)$.
\end{proof}

\section{Guarantees for Different Classes of Graphs} \label{sec:graphtypes}

Armed with the results in the previous section, and by using Cheeger's inequality~\cite{cheeger1969lower}, we now provide corollaries with explicit guarantees for complete, $d$-regular expanders and graphs with bad expansion properties with few extra Erd\"{o}s–R\'{e}nyi edges.

Next, we show guarantees in angle deviation and proportion of correct labels for complete graphs.
Note that the higher the number of nodes $n$, the higher the probability of success.

\begin{corollary} \label{cor:complete}
    For any $0<\gamma<1$, any $\frac{1}{2}<\alpha<1$ and for a complete graph $G$, the angle deviation fulfills:
    \begin{align*}
    & \Prob\left(|\cos\theta(\v_1(\A), \yt)| \ge 1-\gamma \right) 
    \ge 1-O\left( n \exp\left(\frac{-\max(1,C_{\delta,\gamma})(1-2p)^2n}{(1-p)^2}\right) \right) ,
    \end{align*}
    and the proportion of correct labels fulfills:
    \begin{align*}
    & \Prob(\rho(\v_1(\A), \yt) \ge \alpha) 
    \ge 1-O\left( n \exp\left(\frac{-\max(1,C_{\delta,\alpha})(1-2p)^2n}{(1-p)^2}\right) \right) ,
    \end{align*}
    where $\delta$ and $C_{\delta,\gamma}$ are defined as in \cref{thm:angle}, and $C_{\delta,\alpha}$ is defined as in \cref{thm:proportion}.
\end{corollary}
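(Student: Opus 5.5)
The plan is to specialize \cref{thm:angle} and \cref{thm:proportion} to the complete graph. I will compute every spectral quantity explicitly and then collect the two exponential terms into the single $O(\cdot)$ expression of the statement. The stated form needs one interpretive step, flagged below.

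\textbf{Step 1 (spectrum of $\E$).} For the complete graph, $\E=\one\one^\top-\mathbf{I}$. Hence $\lambda_1(\E)=n-1$ with eigenvector $\v_1(\E)=\one/\sqrt n$, and $\lambda_2(\E)=\dots=\lambda_n(\E)=-1$ on the orthogonal complement of $\one$. Therefore $\lambda_1(\E)-\lambda_2(\E)=n$ and $d_{\max}=n-1$. Since $\lambda_1(\E)$ is simple, $\v_1(\E)$ is well defined up to sign. Also $\delta=|\cos\theta(\v_1(\E),\one)|=1$.

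\textbf{Step 2 (the constants).} Substituting $\delta=1$ into the definition of $C_{\delta,\gamma}$, the square-root term vanishes because $\delta^2-1=0$. This leaves $C_{1,\gamma}=\gamma$. Likewise $C_{1,\alpha}=1-\sqrt\alpha$. Both lie in $(0,1)$, so $\max(1,C_{\delta,\gamma})=\max(1,C_{\delta,\alpha})=1$.

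\textbf{Step 3 (substitution).} Plug the quantities into \cref{thm:angle}, using $\frac{n^2}{n-1}\ge n$. The failure probability is then at most
\[
2n\exp\!\left(\frac{-\gamma(1-2p)^2 n}{32(1-p)^2}\right)+2n\exp\!\left(\frac{-(1-2p)^2 n}{128(1-p)^2}\right).
\]
The same computation with \cref{thm:proportion}, which is the case $\gamma=1-\sqrt\alpha$, gives the proportion bound with $\gamma$ replaced by $1-\sqrt\alpha$.

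\textbf{Step 4 (matching the stated form; main obstacle).} The statement is asymptotic in $n$, with $\gamma$ and $\alpha$ fixed. I would read $O(\cdot)$ as hiding constants independent of $n$ and $p$ both in the prefactor and in the exponent. Such constants include $1/32$, $1/128$ and the fixed quantity $\gamma$ (resp. $1-\sqrt\alpha$).

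Under this reading, both terms in Step 3 have the form $n\exp(-c\,(1-2p)^2 n/(1-p)^2)$ with $c>0$ independent of $n$ and $p$. Their sum is at most twice the slower-decaying term. So the failure probability is $O\!\left(n\exp\!\left(-(1-2p)^2n/(1-p)^2\right)\right)$ in this convention. Since $\max(1,C_{\delta,\gamma})=1$ by Step 2, this is exactly the displayed bound, and the $\alpha$ case is identical.

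The delicate point is this absorption of $\gamma$ (resp. $1-\sqrt\alpha$) into the hidden exponent constant. It is legitimate only because $\gamma$ and $\alpha$ are held fixed while $n$ varies. Without that convention the exponent constant is $\min(\gamma/32,1/128)$ and genuinely depends on $\gamma$, so the "$\max$" cannot be taken literally with an absolute constant. I would state this convention explicitly at the start of the proof. After that, the argument is just Steps 1–3 plus the observation that both exponentials decay linearly in $n(1-2p)^2/(1-p)^2$.
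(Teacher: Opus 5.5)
Your argument is correct, but it takes a more direct route than the paper. The paper never computes the spectrum of $\E$. It notes $d_{\max}=d_\avg=n-1$ (so $d_\Delta=0$), lower-bounds the Fiedler value via Cheeger's inequality $\lambda_{n-1}(\L)\ge h_G^2/(2d_{\max})$ together with $h_G^2/d_{\max}\in\Omega(n)$, and then invokes \cref{cor:angle,cor:proportion}, which in turn pass through \cref{lem:bound_eigengap}. You instead write $\E=\one\one^\top-\mathbf{I}$, get the eigengap $\lambda_1(\E)-\lambda_2(\E)=n$ exactly, and plug it straight into \cref{thm:angle,thm:proportion}.

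Your route is more elementary and gives explicit constants. Cheeger only certifies $\lambda_{n-1}(\L)\ge\lceil n/2\rceil^2/(2(n-1))\approx n/8$, whereas the true value is $n$; for $K_n$ the bound of \cref{lem:bound_eigengap} is tight, so your direct computation loses nothing. You also pin down $\delta=1$, $C_{\delta,\gamma}=\gamma$ and $C_{\delta,\alpha}=1-\sqrt{\alpha}$, which the paper leaves symbolic. The paper's route buys uniformity with the expander and smoothed-graph corollaries, where no explicit spectrum is available and Cheeger's inequality is the natural tool.

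Your Step~4 caveat is well taken and applies equally to the paper's proof. The paper also silently absorbs constants into the exponent, through $\Omega(n)$ and the Cheeger factor; compare \cref{rmk:angle}, which drops the $32$ and $128$. So both arguments establish the statement only under that loose reading of $O(\cdot)$. Indeed $C_{\delta,\gamma}\le 1$ for every $\delta\in[0,1]$, so $\max(1,C_{\delta,\gamma})=1$ always and the $\max$ in the statement is vacuous. The honest exponent constant is $\min(C_{\delta,\gamma}/32,\,1/128)$, which suggests $\min$ was intended.
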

\begin{proof}
In a complete graph, $d_{\max}=d_{\avg}=n-1$, and Cheeger’s inequality gives 
$\lambda_{n-1}(\L) \geq \frac{h_G^2}{2d_{\max}}$~\cite{cheeger1969lower}.
We also know that $\frac{h_G^2}{d_{\max}} \in \Omega(n)$.
We then invoke \cref{cor:angle,cor:proportion}.
\end{proof}

For completeness, we start by providing some definitions.
A $d$-regular graph is a graph where each node has $d$ neighbors.
A $d$-regular graph with $n$ nodes is an expander with constant $c>0$ if, for every set of nodes $S \subset \{1,...,n\}$ with $|S| \leq n/2$, the number of edges from nodes in $S$ to nodes outside $S$ is greater than or equal to $c d |S|$.

In what follows, we show guarantees for expander graphs.
Again, the higher the number of nodes $n$, the higher the probability of success.

\begin{corollary}
    For any $0<\gamma<1$, any $\frac{1}{2}<\alpha<1$ and for a $d$-regular expander graph $G$ with $d \in \Omega(\log^2 n)$ and constant $c>0$, the angle deviation fulfills:
    \begin{align*}
    & \Prob\left(|\cos\theta(\v_1(\A), \yt)| \ge 1-\gamma \right) 
    \ge 1-O\left( n \exp\left(\frac{-\max(1,C_{\delta,\gamma})(1-2p)^2c^4\log^2 n}{(1-p)^2}\right) \right) ,
    \end{align*}
    and the proportion of correct labels fulfills:
    \begin{align*}
    & \Prob(\rho(\v_1(\A), \yt) \ge \alpha) 
    \ge 1-O\left( n \exp\left(\frac{-\max(1,C_{\delta,\alpha})(1-2p)^2c^4\log^2 n}{(1-p)^2}\right) \right) ,
    \end{align*}
    where $\delta$ and $C_{\delta,\gamma}$ are defined as in \cref{thm:angle}, and $C_{\delta,\alpha}$ is defined as in \cref{thm:proportion}.
\end{corollary}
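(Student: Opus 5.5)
The proof mirrors that of \cref{cor:complete}: we bound the Fiedler value through the Cheeger constant and then invoke \cref{cor:angle,cor:proportion}.

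For a $d$-regular graph we have $d_{\max}=d_{\avg}=d$, so $d_{\Delta}=0$. The quantity $\lambda_{n-1}(\L)-d_{\Delta}$ in both corollaries therefore reduces to the Fiedler value. By the expander definition, every $S$ with $|S|\le n/2$ has at least $cd|S|$ edges leaving it. Hence the Cheeger constant, taken as the minimum over such $S$ of $|E(S,\bar S)|/|S|$, satisfies $h_G \ge cd$. Cheeger's inequality~\cite{cheeger1969lower} then gives
\[
\lambda_{n-1}(\L) \ge \frac{h_G^2}{2d_{\max}} \ge \frac{c^2 d^2}{2d} = \frac{c^2 d}{2}.
\]

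Substituting into \cref{cor:angle}, the exponents become of order
\[
\frac{C_{\delta,\gamma}(1-2p)^2 (c^2 d/2)^2}{32(1-p)^2 d} = \frac{C_{\delta,\gamma}(1-2p)^2 c^4 d}{128(1-p)^2}
\qquad\text{and}\qquad
\frac{(1-2p)^2 c^4 d}{512(1-p)^2}.
\]
Using $d\in\Omega(\log^2 n)$, each exponent is at least a constant times $c^4\log^2 n\,(1-2p)^2/(1-p)^2$, multiplied by $C_{\delta,\gamma}$ or by $1$ respectively. Summing the two terms $2n e^{-a}+2n e^{-b}$ and absorbing constants into the $O(\cdot)$ produces the stated bound. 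The same substitution into \cref{cor:proportion} with $C_{\delta,\alpha}$ gives the second claim.

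The only delicate step is the bookkeeping that merges the two exponentials into the single stated form. Taken literally, $2ne^{-a}+2ne^{-b}\le 4n e^{-\min(a,b)}$, which corresponds to $\min(1,C)$. I would follow the convention of \cref{cor:complete} and write the result as stated, with $\max(1,C)$. Justifying this requires reading the $O(\cdot)$ as describing the dominant dependence on $n$, $p$ and $c$, with constants in $C$ treated as absorbed. I would flag that the literal bound should carry $\min(1,C)$.

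A secondary check is that the expander definition matches the normalization of $h_G$ used in Cheeger's inequality as quoted. If $h_G$ is instead defined via volume, then $h_G\ge c$, and we obtain $\lambda_{n-1}(\L)\ge c^2/(2d)$. That bound would be too weak, so I would use the edge-expansion normalization consistent with $\lambda_{n-1}(\L)\ge h_G^2/(2d_{\max})$.
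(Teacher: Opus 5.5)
Your proposal is correct and follows the paper's proof: use $d_{\max}=d_{\avg}=d$ so that $d_{\Delta}=0$, apply Cheeger's inequality with an expansion lower bound on $h_G$, then invoke \cref{cor:angle,cor:proportion}. The one difference is that you derive $h_G\ge cd$ directly from the expander definition, giving $\lambda_{n-1}(\L)\ge c^2d/2$ and exactly the $c^4 d$ exponent in the statement, whereas the paper cites~\cite{hoory2006expander} for $h_G^2/d_{\max}\in\Omega(c^4 d)$, which read literally would only give $c^8$.

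Your $\min$-versus-$\max$ flag is also well founded, and the paper does not address it. Since $C_{\delta,\gamma},C_{\delta,\alpha}\le 1$, the factor $\max(1,C)$ is identically $1$, whereas the corollaries actually deliver $\min(1,C)=C$. By Cauchy--Schwarz, $C$ is bounded below by a positive constant depending only on $\gamma$ (resp.\ $\alpha$), e.g.\ $C_{\delta,\gamma}\ge 1-\sqrt{1-\gamma}$, so reading the $O(\cdot)$ with constants absorbed is defensible.
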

\begin{proof}
For a $d$-regular expander graph, $d_{\max}=d_\avg=d$.
Cheeger’s inequality gives 
$\lambda_{n-1}(\L) \geq \frac{h_G^2}{2d_{\max}}$~\cite{cheeger1969lower}.
We also know that $\frac{h_G^2}{d_{\max}} \in \Omega(c^4d)$~\cite{hoory2006expander}.
We then invoke \cref{cor:angle,cor:proportion}.
\end{proof} 

The following result holds for any connected graph $G$, even those with bad expansion properties, such as grid graphs, for instance.
We first state a result that appeared in~\cite{krivelevich2015smoothed}, although in a slightly more general form.

\begin{lemma}[Theorem 2 in~\cite{krivelevich2015smoothed}]
\label{lem:cheeger_g_plus_er}
Let $G'= (V,E')$ be a connected graph.
Let $G''= (V,E'')$ be an Erd\"{o}s–R\'{e}nyi graph with edge probability $\epsilon/n$ where $\epsilon \in [1,n]$.
Consider $G = (V, E'\cup E'')$.
We have that $h_G \geq \frac{\epsilon}{256 + 256\log n}$ with probability at least $1 - n^{-2.2 - \frac{\log \epsilon}{2}}$.
\end{lemma}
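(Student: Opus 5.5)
This statement is quoted from~\cite{krivelevich2015smoothed} (their Theorem 2), so the plan is mainly to reproduce their argument in our notation. The goal is a lower bound on the edge expansion $h_G = \min_{S: |S| \le n/2} \frac{|E(S, V\setminus S)|}{|S|}$ of $G = (V, E'\cup E'')$. Edges of $G$ only add to cuts, so $h_G \ge \min_S \frac{|E'(S,\bar S)| + |E''(S,\bar S)|}{|S|}$. I would use the two sources of edges in complementary regimes of $|S|$. The Erd\H{o}s--R\'enyi part supplies expansion for large sets. Connectivity of $G'$ is kept for small or structured sets.

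\textbf{Step 1 (large sets).} Fix $S$ with $k = |S| \le n/2$. Then $|E''(S,\bar S)|$ is a binomial with $k(n-k) \ge kn/2$ trials and success probability $\epsilon/n$, so its mean is at least $\epsilon k/2$. A Chernoff lower-tail bound gives
\[
\Prob\bigl(|E''(S,\bar S)| < \epsilon k/4\bigr) \le e^{-\epsilon k/16}.
\]
A union bound over the at most $\binom{n}{k} \le e^{k\log(en/k)}$ sets of size $k$ works only once $\epsilon/16 > \log(en/k)$. For $\epsilon$ of order $\log n$ this covers all $k$ down to $n/\mathrm{poly}(\epsilon)$. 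For small $\epsilon$ it fails for small $k$, which is exactly where the $\log n$ loss in the bound will come from.

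\textbf{Step 2 (small sets).} This is the main obstacle. Here I would use that $G'$ is connected, so every $S$ has at least one $G'$-edge leaving it. That alone only gives expansion $1/k$, which is useless. The fix I would try, following~\cite{krivelevich2015smoothed}, is a decomposition argument.
\begin{itemize}
\item Take a spanning tree $T$ of $G'$ and cut it into $\Theta(n/\ell)$ connected pieces of size between $\ell$ and $O(\ell)$, with $\ell \approx \log n$.
\item Contract each piece to a single vertex. The random edges between pieces then form essentially an Erd\H{o}s--R\'enyi graph on $\Theta(n/\ell)$ vertices with edge probability about $\epsilon \ell^2/n$, whose expansion is of order $\epsilon\ell$ by Step 1 applied to the contracted graph.
\item For a set $S$ that splits pieces, each split piece contributes at least one tree edge to the cut, because the piece is connected.
\item For a set $S$ that is a union of whole pieces, the random edges supply cut edges at rate of order $\epsilon\ell$ per piece, i.e. of order $\epsilon$ per vertex.
\end{itemize}
Mixing the two cases costs a factor $O(\ell) = O(\log n)$, which yields $h_G \ge \epsilon/(C(1+\log n))$. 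Careful constant bookkeeping would give $C = 256$.

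\textbf{Step 3 (failure probability).} Sum the Chernoff failure probabilities over all $k$, and over the $n^{O(1)}$ choices in the decomposition. The dominant term is at $k \approx \ell$, giving roughly $n^{-c}\epsilon^{-c'}$. Matching the exponent $n^{-2.2-\frac{\log\epsilon}{2}}$ exactly would require tracking the constants as the original paper does; I would defer to~\cite{krivelevich2015smoothed} for those precise constants rather than re-derive them.
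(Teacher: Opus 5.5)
The paper gives no proof of this lemma; it imports it from \cite{krivelevich2015smoothed}. Your sketch therefore has to stand on its own, and its central step fails. Step~2 rests on cutting a spanning tree into connected pieces of size between $\ell$ and $O(\ell)$, and such a partition need not exist. If $G'$ is a star, it is its own unique spanning tree, and every connected piece avoiding the centre is a single leaf. So the only partition into connected pieces of size at least $2$ is the trivial one. The lemma puts no degree bound on $G'$, so this is a genuine failure, not a matter of constants. Even for bounded degree, the sketch leaves two things open. A general $S$ both splits some pieces and contains others, and the claim that mixing the two cases costs only a factor $O(\ell)$ is not argued. Also, with $\ell\approx\log n$ the tree edges give only about $1/\ell$ per vertex of a split piece, far below $\epsilon/(256(1+\log n))$ once $\epsilon$ is large, so the random edges would still have to do the work there. (Minor: since $E'$ and $E''$ may overlap, you only get $|E(S,\bar S)|\ge\max(|E'(S,\bar S)|,|E''(S,\bar S)|)$, not the sum.)

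The missing idea is to count bad sets by their boundary in a spanning tree $T$ of $G'$ rather than by their size. This needs no decomposition and no degree bound. Let $\tau=\epsilon/(256(1+\log n))$. For $|S|=k\le n/2$, let $t\ge 1$ be the number of edges of $T$ between $S$ and $\bar S$.
\begin{itemize}
\item If $t\ge\tau k$, you are done deterministically. This covers every $S$ with $k\le 1/\tau$, which is exactly where connectivity is used.
\item If $t<\tau k$, deleting the $t$ cut edges splits $T$ into $t+1$ subtrees whose sides must alternate. So $S$ is determined up to complement by its cut edges, and there are at most $2\binom{n-1}{t}\le 2n^t$ such sets.
\item For each such $S$, the number of $G''$-edges across the cut is $\mathrm{Bin}(k(n-k),\epsilon/n)$ with mean at least $\epsilon k/2$. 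It drops below $\epsilon k/4$ (which is at least $\tau k$) with probability at most $e^{-\epsilon k/16}$. Meanwhile $n^t\le n^{\tau k}\le e^{\epsilon k/256}$.
\end{itemize}
Summing over $t<\tau k$ and $k>1/\tau$ bounds the failure probability by $\sum_{k>1/\tau}4e^{-15\epsilon k/256}\le 2n\,e^{-15(1+\log n)}<n^{-14}$. This replaces the $\binom{n}{k}$ entropy that defeats your Step~1 for small $k$ with $n^{t}$.

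Finally, your Step~3 cannot simply defer the exponent, because $n^{-2.2-\frac{\log\epsilon}{2}}$ is not attainable uniformly over $\epsilon\in[1,n]$. Take $G'$ a star and $\epsilon=c(1+\log n)$ with a fixed $c>256$. A leaf with no $G''$-neighbour other than the centre gives $h_G\le 1<c/256$. That event has probability about $e^{-\epsilon}=e^{-c}n^{-c}$, which exceeds $n^{-2.2-\frac{\log\epsilon}{2}}$ once $\log\log n>2c$. The tree-boundary argument proves the expansion bound with a polynomial failure probability, not the stated $\epsilon$-dependent one.
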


Finally, we show guarantees for graphs with few extra Erd\"{o}s–R\'{e}nyi edges.
As before, the higher the number of nodes $n$, the higher the probability of success.

\begin{corollary}
Let $G'= (V,E')$ be a connected graph.
Let $G''= (V,E'')$ be an Erd\"{o}s–R\'{e}nyi graph with edge probability $\Omega\left( \frac{\log^6 n}{n} \right)$.
Consider $G = (V, E'\cup E'')$.
For any $0<\gamma<1$, any $\frac{1}{2}<\alpha<1$ and for graph $G$, the angle deviation fulfills:
    \begin{align*}
    & \Prob\left(|\cos\theta(\v_1(\A), \yt)| \ge 1-\gamma \right) 
    \ge 1-O\left( n \exp\left(\frac{-\max(1,C_{\delta,\gamma})(1-2p)^2\log^2 n}{(1-p)^2}\right) \right) ,
    \end{align*}
    and the proportion of correct labels fulfills:
    \begin{align*}
    & \Prob(\rho(\v_1(\A), \yt) \ge \alpha) 
    \ge 1-O\left( n \exp\left(\frac{-\max(1,C_{\delta,\alpha})(1-2p)^2\log^2 n}{(1-p)^2}\right) \right) ,
    \end{align*}
    where $\delta$ and $C_{\delta,\gamma}$ are defined as in \cref{thm:angle}, and $C_{\delta,\alpha}$ is defined as in \cref{thm:proportion}.
\end{corollary}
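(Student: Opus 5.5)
The plan follows the pattern of the two preceding corollaries: lower bound the Cheeger constant, convert it to a Fiedler-value bound via Cheeger's inequality, and substitute into \cref{cor:angle,cor:proportion}.

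\textbf{Step 1: Cheeger constant.} Write the edge probability of $G''$ as $\epsilon/n$ with $\epsilon = \Omega(\log^6 n)$, so that $\epsilon\in[1,n]$ for large $n$. Applying \cref{lem:cheeger_g_plus_er} gives
\[
h_G \ge \frac{\epsilon}{256+256\log n} = \Omega\!\left(\frac{\epsilon}{\log n}\right)
\]
with probability at least $1-n^{-2.2-\frac{\log\epsilon}{2}}$. Since $\log\epsilon = \Omega(\log\log n)$, this failure probability is $O(n^{-2.2})$. It will be absorbed into the final big-$O$ error term, or the result will be stated conditionally on this event holding.

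\textbf{Step 2: maximum degree.} Every degree is at most $\deg_{G'}(i)+\deg_{G''}(i)$. By a Chernoff bound and a union bound over nodes, $\deg_{G''}(i)=O(\epsilon)$ for all $i$ with high probability, using $\epsilon\gtrsim\log n$. I will assume, as the statement implicitly does, that $G'$ has maximum degree $O(\epsilon)$ (for example bounded degree, as for grids). Then $d_{\max}=O(\epsilon)$.

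\textbf{Step 3: Fiedler value.} Cheeger's inequality gives
\[
\lambda_{n-1}(\L)\ge \frac{h_G^2}{2d_{\max}}=\Omega\!\left(\frac{\epsilon^2/\log^2 n}{\epsilon}\right)=\Omega\!\left(\frac{\epsilon}{\log^2 n}\right).
\]
The exponent in \cref{cor:angle} is then of order
\[
\frac{(\lambda_{n-1}(\L)-d_\Delta)^2}{d_{\max}}\gtrsim\frac{\epsilon^2/\log^4 n}{\epsilon}=\frac{\epsilon}{\log^4 n}.
\]
With $\epsilon=\Omega(\log^6 n)$ this is $\Omega(\log^2 n)$, which explains the $\log^6 n$ threshold. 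Substituting into \cref{cor:angle,cor:proportion} and combining the two exponential terms as $\max(1,C)$, following the convention of the earlier corollaries, produces the stated form.

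\textbf{Main obstacle.} The difficulty is the term $d_\Delta=d_{\max}-d_\avg$. For Erd\H{o}s--R\'enyi degrees, concentration gives fluctuations of order $\sqrt{\epsilon\log n}$, and $G'$ may have nonuniform degrees. Neither is obviously dominated by $\lambda_{n-1}(\L)=\Omega(\epsilon/\log^2 n)$: for $\epsilon=\log^6 n$ we have $\epsilon/\log^2 n=\log^4 n$ while $\sqrt{\epsilon\log n}=\log^{3.5}n$, so the former dominates, but only barely. I would handle this step as follows.
\begin{itemize}
\item Use the Chernoff bound to show $d_\Delta \le \Delta(G') + O(\sqrt{\epsilon\log n})$.
\item Assume $G'$ has bounded degree.
\item Check that $\lambda_{n-1}(\L)-d_\Delta \ge \tfrac12\lambda_{n-1}(\L)$ once the constant hidden in $\Omega(\log^6 n)$ is large enough.
\end{itemize}
If this fails, I would fall back to stating the result in the regime where $d_\Delta$ is negligible, as in the authors' earlier remarks.
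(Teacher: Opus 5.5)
Your route is the paper's route: \cref{lem:cheeger_g_plus_er} gives $h_G=\Omega(\epsilon/\log n)$, Cheeger's inequality turns this into a bound on $\lambda_{n-1}(\L)$, and \cref{cor:angle,cor:proportion} finish. The paper's proof consists only of that sketch. It asserts $d_{\max},d_{\avg}\in\Omega(\epsilon)$, which is the wrong direction: both the Cheeger step and the exponent $(\lambda_{n-1}(\L)-d_\Delta)^2/d_{\max}$ need an \emph{upper} bound on $d_{\max}$. It also never controls $d_\Delta$. So the two points you raise are real gaps in the published argument, and your extra hypothesis on $G'$ is necessary. If $G'$ is a star and $\epsilon=o(n)$, then $d_\Delta\ge n-O(\epsilon)$, while $\lambda_{n-1}(\L)\le\frac{n}{n-1}\min_i d_i=O(\epsilon)$. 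Hence $\lambda_{n-1}(\L)-d_\Delta<0$, and squaring it in \cref{cor:angle} is invalid. So neither your argument nor the paper's proves the statement for an arbitrary connected $G'$.

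Your Step~2 assumption $d_{\max}(G')=O(\epsilon)$ is too weak for the $d_\Delta$ step. You need bounded degree, or more generally $d_{\max}(G')=o(\epsilon/\log^2 n)$. Under that hypothesis your check succeeds and no fallback is needed:
\begin{itemize}
\item $d_{\max}\le d_{\max}(G')+\max_i\deg_{G''}(i)$ and $d_{\avg}\ge d_{\avg}(G'')$, so $d_\Delta\le d_{\max}(G')+O(\sqrt{\epsilon\log n})$.
\item $\frac{\epsilon/\log^2 n}{\sqrt{\epsilon\log n}}=\frac{\sqrt{\epsilon}}{\log^{5/2}n}=\Omega(\sqrt{\log n})$.
\end{itemize}
Hence $d_\Delta\le\frac12\lambda_{n-1}(\L)$ for all large $n$, whatever the constant hidden in $\Omega(\log^6 n)$.

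Two steps still need fixing.

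First, the graph-level failure probabilities cannot be absorbed into the stated error term. For $\epsilon=\Theta(\log^6 n)$, the failure probability $n^{-2.2-\frac{\log\epsilon}{2}}$ in \cref{lem:cheeger_g_plus_er} is $e^{-\Theta(\log n\log\log n)}$. Your Chernoff/union-bound step fails with polynomially small probability. Both are far larger than $n e^{-c\log^2 n}$. Take your second option: fix $G$ on the event where the Cheeger and degree bounds hold, and let $\Prob$ range over the edge flips only. The paper implicitly does the same.

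Second, the two exponentials cannot be ``combined as $\max(1,C)$.'' Write $X=(1-2p)^2(\lambda_{n-1}(\L)-d_\Delta)^2/\big((1-p)^2d_{\max}\big)$. Then $2ne^{-CX/32}+2ne^{-X/128}\le 4ne^{-\min(C/32,\,1/128)X}$, so the rate is governed by the minimum. Since $0<C_{\delta,\gamma}\le 1$ (and likewise for $C_{\delta,\alpha}$), $\max(1,C)$ is identically $1$ and overstates what \cref{cor:angle,cor:proportion} give. This flaw comes from the statement itself. What you can actually derive is the bound with $\min(1,C)$ in the exponent, up to constants.
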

\begin{proof}
For graph $G$, $d_{\max} \in \Omega(\epsilon)$ and $d_\avg \in \Omega(\epsilon)$.
Cheeger’s inequality gives 
$\lambda_{n-1}(\L) \geq \frac{h_G^2}{2d_{\max}}$~\cite{cheeger1969lower}.
By \cref{lem:cheeger_g_plus_er} we have that $h_G \in \Omega\left( \frac{\epsilon}{\log n} \right)$.
We then invoke \cref{cor:angle,cor:proportion}.
\end{proof}

\section{Experimental Validation} \label{sec:exp}

\begin{figure}
    \centering
    \begin{subfigure}[t]{0.48\textwidth}
        \centering
        \includegraphics[width=\linewidth]{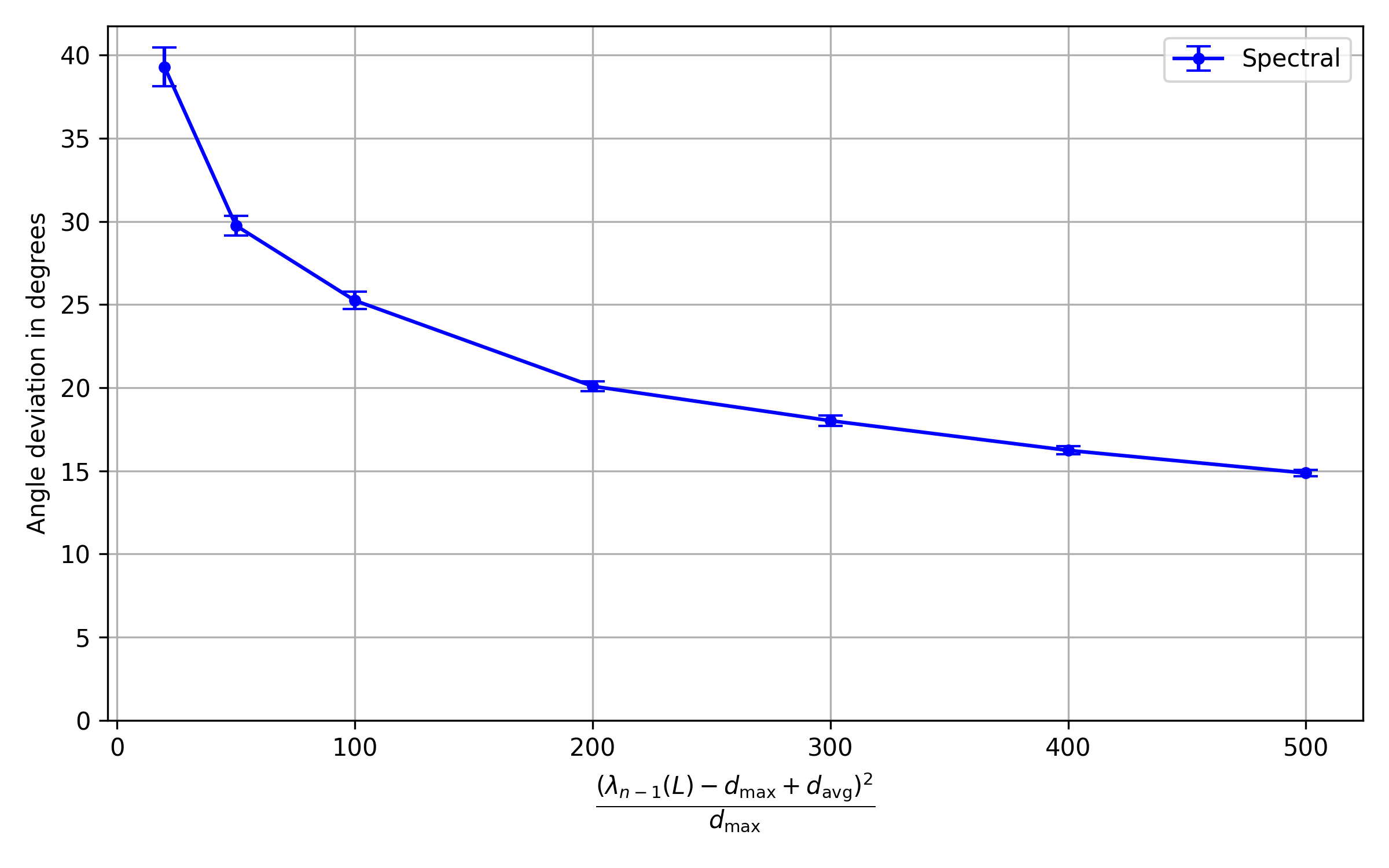}
         \caption{}
    \end{subfigure}
    \hfill
    \begin{subfigure}[t]{0.48\textwidth}
        \centering
        \includegraphics[width=\linewidth]{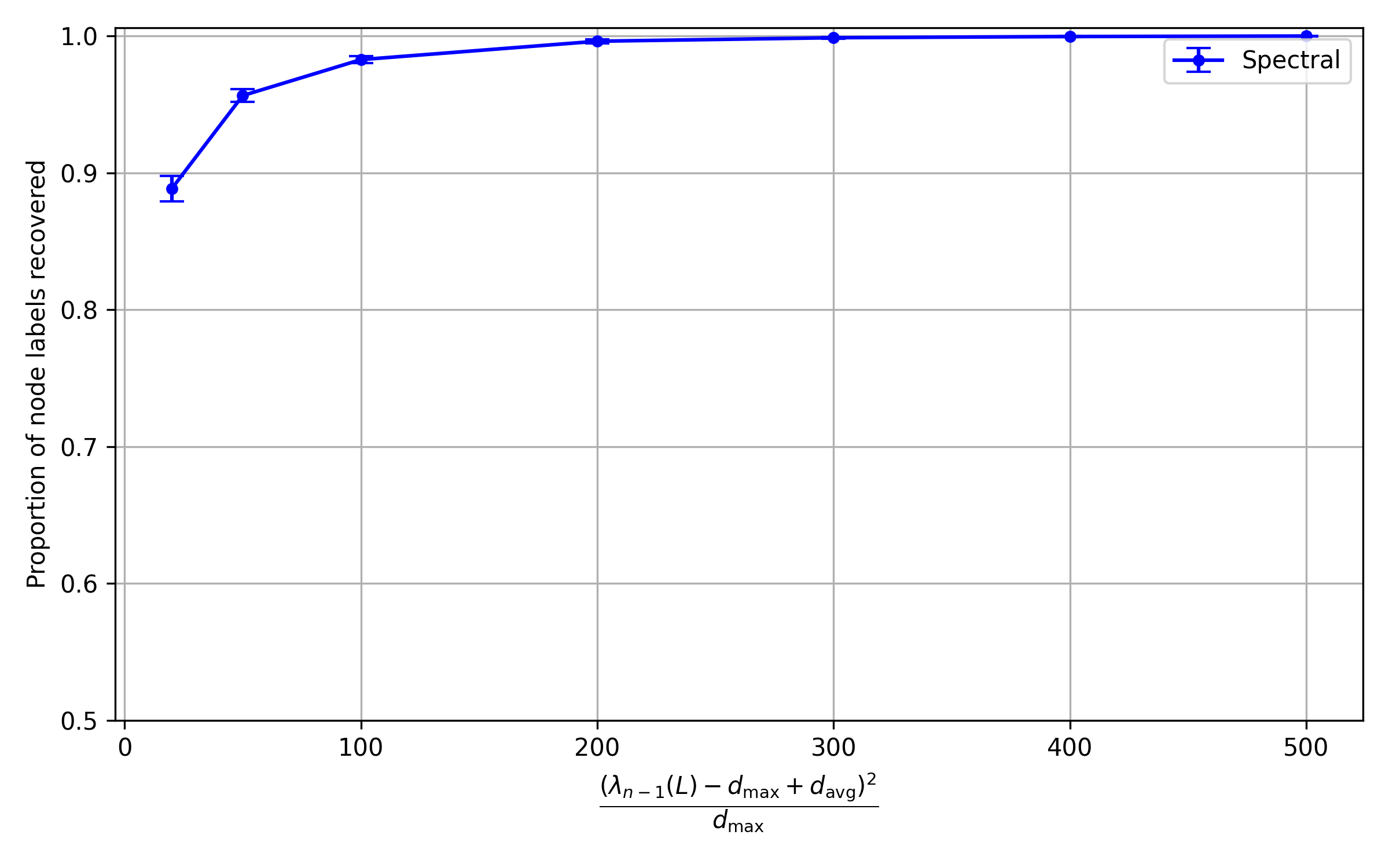}
        \caption{} 
    \end{subfigure}
    \caption{Effect of the structural term $\frac{(\lambda_{n-1}(\L)-d_{\max}+d_\avg)^2}{d_{\max}}$ on the angle deviation (a) and proportion of correctly recovered node labels (b) for Erd\"{o}s–R\'{e}nyi graphs (with varying edge density and $n$) with $p=0.4$.
    Error bars at 95\% confidence level over 30 repetitions.
    As predicted by \cref{cor:angle,cor:proportion}, the higher the structural term, the lower the angle deviation and the higher the proportion of correctly recovered node labels.}
    \label{fig:q-ER}
\end{figure}

\begin{figure}
    \centering
    \includegraphics[width=0.48\linewidth,trim={10 0 50 35},clip]{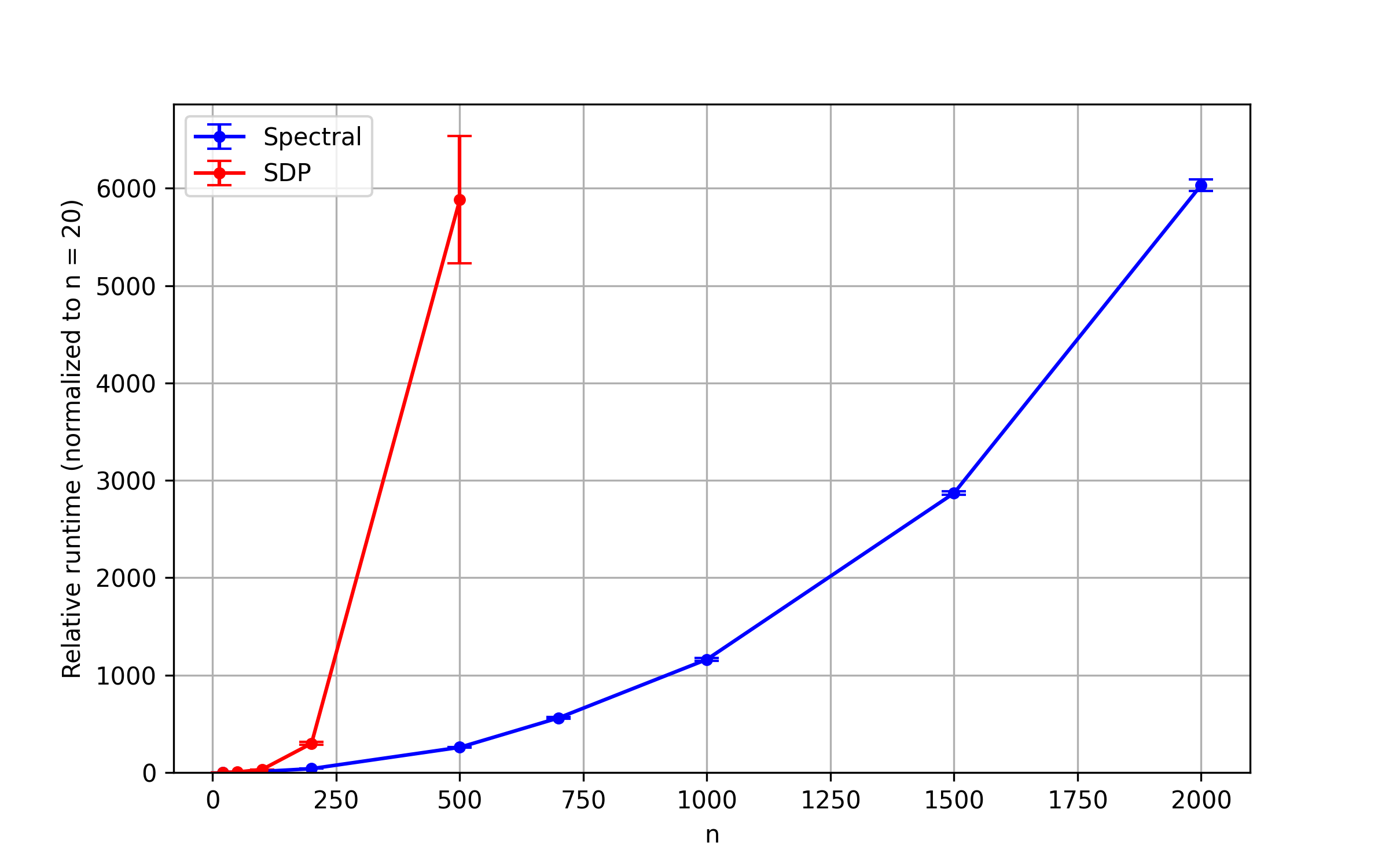}
    \caption{Effect of number of nodes $n$ on the runtime of our spectral method (Spectral) and semidefinite programming (SDP) for Erd\"{o}s–R\'{e}nyi graphs (60\% edge density) for $p=0.4$.
    Error bars at 95\% confidence level over 10 repetitions.
    Spectral structured prediction is a far more efficient algorithm.
    SDP was run only up to graphs of size 500 nodes due its high runtime.}
    \label{fig:runtime-comparison}
\end{figure}

\begin{figure}
    \centering
    \begin{subfigure}[t]{0.48\textwidth}
        \centering
        \includegraphics[width=\linewidth,trim={10 0 50 35},clip]{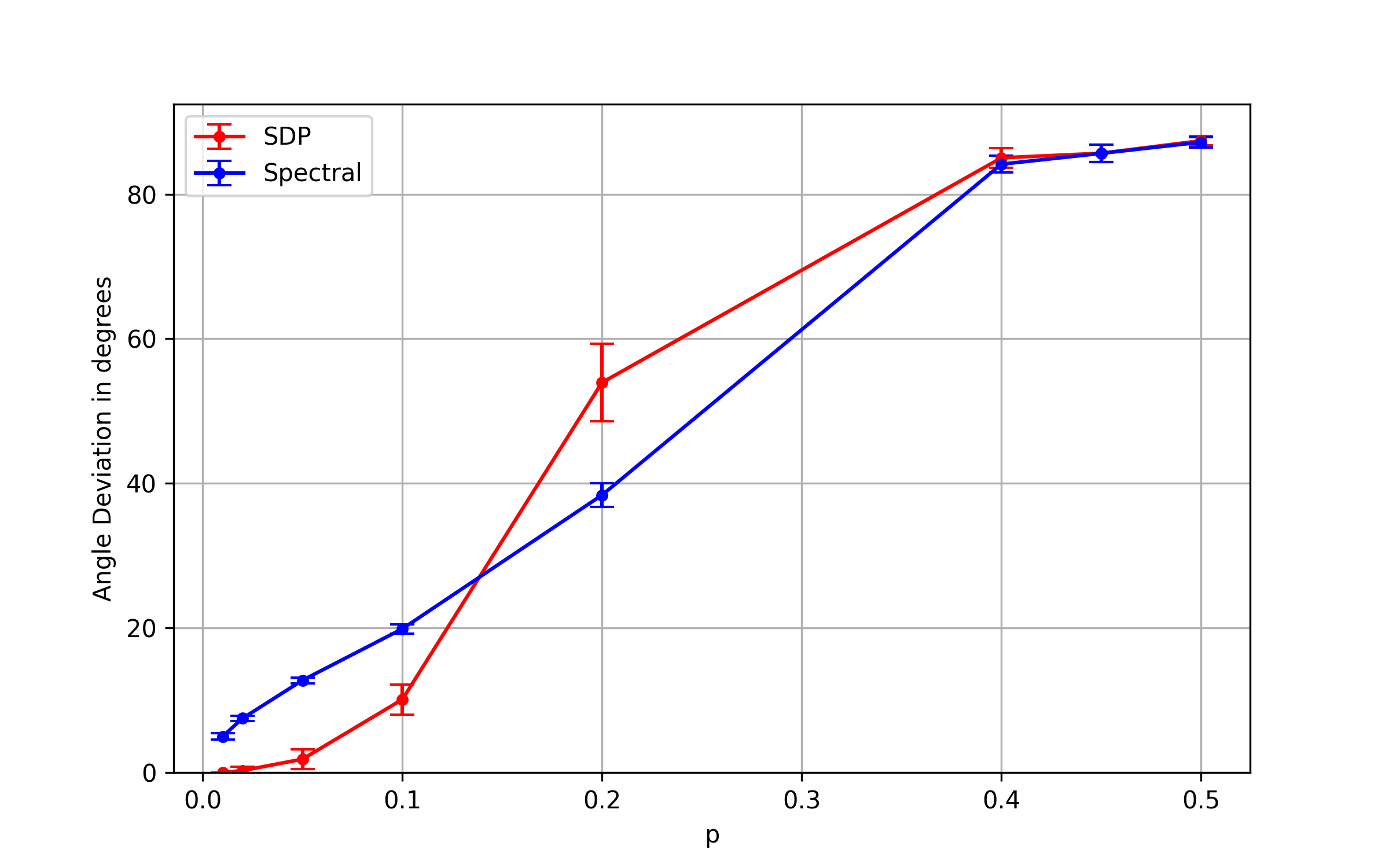}
        \caption{}
    \end{subfigure}
    \hfill
    \begin{subfigure}[t]{0.48\textwidth}
        \centering
        \includegraphics[width=\linewidth,trim={10 0 50 35},clip]{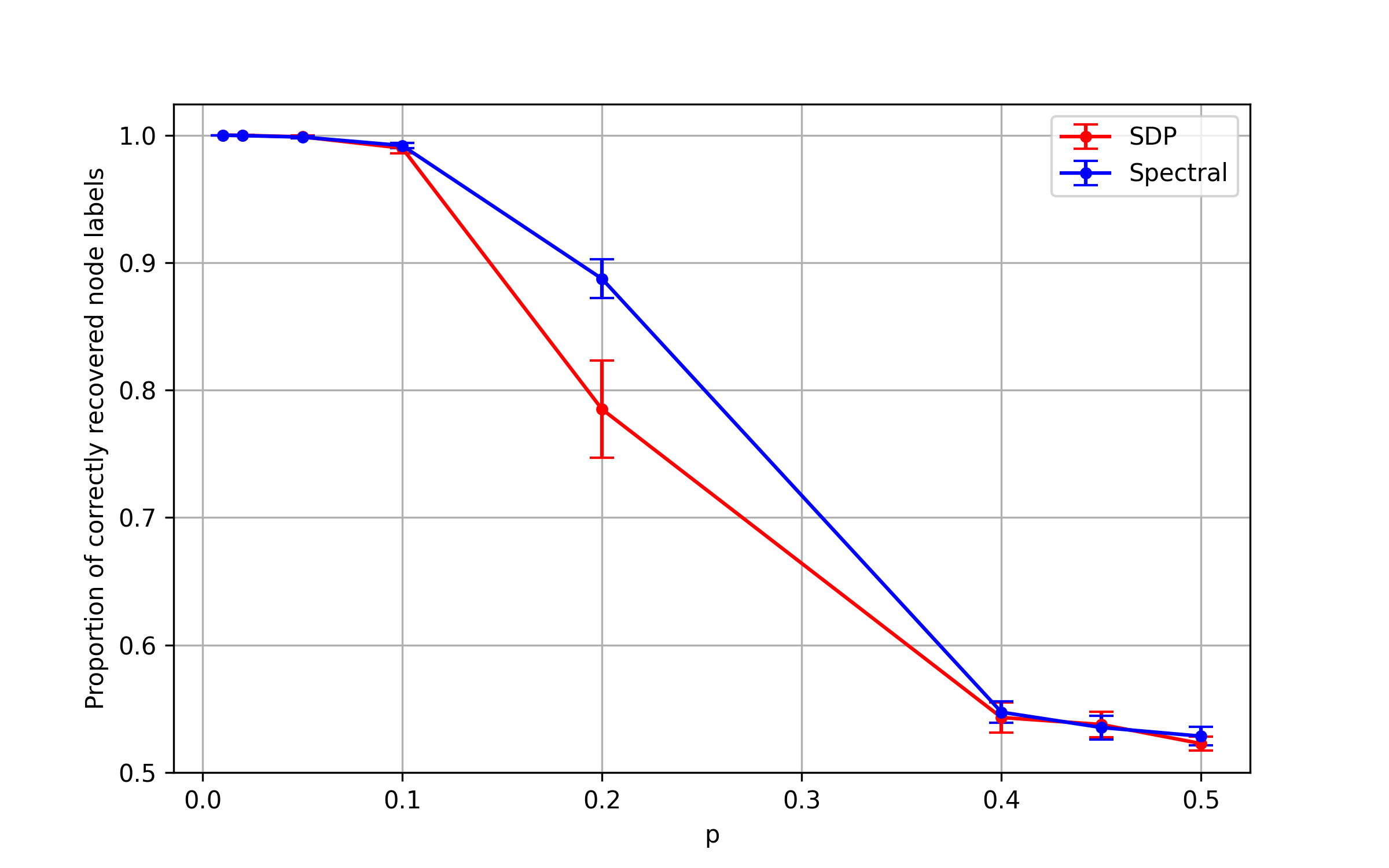}
        \caption{}
    \end{subfigure}
    \caption{Effect of edge-noise parameter $p$ on the angle deviation (a) and proportion of correctly recovered node labels (b) for $d$-regular expanders ($d=6$) of $n=200$ nodes.
    Error bars at 95\% confidence level over 30 repetitions.
    Our spectral method (Spectral) and semidefinite programming (SDP) produced comparable results.}
    \label{fig:expander-p}
\end{figure}

We validate our theory and confirm the benefits of spectral structured prediction through synthetic experiments.
We conducted these experiments through various parameters of interest, such as number of nodes $n$, edge-noise probability $p$, and the structural term $\frac{\left(\lambda_{n-1}(\L)-d_{\max}+d_\avg\right)^2}{d_{\max}}$ involved in the theoretical results in \cref{cor:angle,cor:proportion}.
We generated different classes of graphs, including Erd\"{o}s–R\'{e}nyi graphs, regular expander graphs and complete graphs.

\paragraph{Validation of Our Theory.}

In \cref{fig:q-ER}, we show that the structural term $\frac{\left(\lambda_{n-1}(\L)-d_{\max}+d_\avg\right)^2}{d_{\max}}$ indeed drives the accuracy of structured prediction.
As predicted by \cref{cor:angle}, the higher the structural term, the lower the angle deviation.
As predicted by \cref{cor:proportion}, the higher the structural term, the higher the proportion of correctly recovered node labels.

\paragraph{Runtime Comparison.}

Next, we tested how the number of nodes $n$ affect the runtime of our spectral method, and that of semidefinite programming.
We chose semidefinite programming since this method has theoretical guarantees that we discuss in \cref{rmk:angle,rmk:proportion}.
More specifically, semidefinite programming was analyzed in~\cite{bello_2019,ke_2025}.
Unfortunately, the methods of~\cite{globerson_2015,foster2018inference} apply only to specific classes of graphs.
We normalize each method (our spectral method and semidefinite programming) independently to the runtime for $n=20$, to account for differences in implementations.
That is, both methods have runtime $1$ for $n=20$.
\cref{fig:runtime-comparison} shows that our spectral method runs significantly faster than semidefinite programming.

\paragraph{Comparison Across Different Classes of Graphs.}

Given that our spectral method is considerably faster, we then ask whether it produces comparable results to those of semidefinite programming, regarding the effect of the edge-noise parameter $p$ in the angle deviation and proportion of correctly recovered node labels.
In order to do this, we run experiments on different classes of graphs.
\cref{fig:expander-p} shows that our spectral method and semidefinite programming produce comparable results for regular expander graphs.
(Erd\"{o}s–R\'{e}nyi graphs and complete graphs shown in Appendix~\ref{app:moregraphs}.)

In Appendix~\ref{app:moregraphs}, we also show additional experiments regarding the effect of the number of nodes $n$ in the angle deviation and proportion of correctly recovered node labels for Erd\"{o}s–R\'{e}nyi graphs, regular expanders graphs, as well as for complete graphs.

Notably, while the dominant eigenvector is a simpler relaxation of \cref{eq:pairwise} as compared to SDP, our empirical results demonstrate that our method may actually recover node labels better than SDP in high noise (high $p$) regimes since SDP may overfit upon the noise as it is a tighter relaxation of \cref{eq:pairwise}.

\paragraph{Real-World Experiments.}

In Appendix~\ref{app:realworld}, we include an experiment on a large real-world case with 131,828 nodes and 841,372 edges.
Our spectral algorithm runs in 0.1484 seconds while SDP would struggle at this scale in commercial solvers.

\section{Concluding Remarks}

There are parallels with our spectral approach with spectral clustering within the context of stochastic block models~\cite{Lei_2015}, \cite{abbe_2016}. However, the fundamental model that we consider is different to stochastic block models, and existing guarantees rely on $k$-means, a problem that is NP-hard~\cite{aloise_2009} and NP-hard to approximate in polynomial time~\cite{awasthi_2015}.
In contrast, our spectral method is polynomial time.

There are several ways of extending our results.
A first compelling avenue is to analyze social network models, such as stochastic block models~\cite{abbe_2016,wang_2020} in which the noisy adjacency matrix is nonnegative.
Currently, our probabilistic guarantees apply to only $k=2$ node labels.
In order to extend this model to account for $k>2$ node labels, we would need to analyze the $k-1$ leading eigenvectors of $\A$, similar in spirit to prior work in social networks~\cite{Lei_2015}.
While we currently focus on pairwise interactions (i.e., edges), we could also extend our analysis to more $m$-order interactions for $m>2$ (i.e., hyperedges) as the problem presented in~\cite{benson_tensor_2015}.
This would lead to structured predictions for $m$-order tensors, though with possibly much greater difficulty.

\bibliographystyle{plain}
\bibliography{references}

\clearpage
\appendix

\section{Technical Lemmas} \label{app:techlemmas}

The following lemmas on matrix concentration theory and eigenvector perturbation play important roles in the proofs of our main theorems. 

\begin{lemma}[Matrix Hoeffding inequality, Theorem 1.3~\cite{Tropp_2011}] \label{lem:hoeffding} 
For a finite sequence $\{\X_k\}$ of independent, random, self-adjoint matrices with dimension $d$, and let $\{\M_k\}$ be a sequence of fixed self-adjoint matrices. Assuming that each matrix satisfies $\Exp[\X_k]=\zero\zero^\top$ and 
$\X_k^2 \preceq \M_k^2$ almost surely. Then for all $t \ge 0$, 
\[
    \Prob\left(\lambda_1\left(\sum_k \X_k\right) \geq t\right) \leq d e^{-t^2 / 8 \sigma^2},
\]
where $\sigma^2=\norm{\sum_k \M_k^2}$.
\end{lemma}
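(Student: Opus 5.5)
The plan is to follow the matrix Laplace transform method of~\cite{Tropp_2011}. Write $\mathbf{Y} = \sum_k \X_k$. For any $\theta>0$, Markov's inequality applied to $e^{\theta \lambda_1(\mathbf{Y})}$ gives
\[
\Prob(\lambda_1(\mathbf{Y}) \ge t) \le e^{-\theta t}\, \Exp\, e^{\lambda_1(\theta \mathbf{Y})} \le e^{-\theta t}\, \Exp\, \mathrm{tr}\, e^{\theta \mathbf{Y}} .
\]
The second step uses $e^{\lambda_1(\theta\mathbf{Y})} = \lambda_1(e^{\theta \mathbf{Y}}) \le \mathrm{tr}\, e^{\theta\mathbf{Y}}$, which holds because $e^{\theta\mathbf{Y}}$ is positive definite.

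To handle the expectation of the trace exponential of a sum of independent matrices, I would use Lieb's concavity theorem. For a fixed self-adjoint $\H$, the map $\mathbf{B} \mapsto \mathrm{tr}\exp(\H + \log \mathbf{B})$ is concave on positive definite matrices. Conditioning on all summands but one and applying Jensen's inequality, one summand at a time, yields the subadditivity of matrix cumulant generating functions:
\[
\Exp\, \mathrm{tr}\, e^{\theta \mathbf{Y}} \le \mathrm{tr} \exp\Big(\sum_k \log \Exp\, e^{\theta \X_k}\Big).
\]
Because $\mathrm{tr}\exp$ is monotone with respect to the semidefinite order, it then suffices to bound each cumulant, $\log \Exp\, e^{\theta \X_k} \preceq 2\theta^2 \M_k^2$. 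Granting this bound, we get $\mathrm{tr}\exp(2\theta^2 \sum_k \M_k^2) \le d\, e^{2\theta^2\sigma^2}$. Choosing $\theta = t/(4\sigma^2)$ gives $d\,e^{-t^2/(8\sigma^2)}$, which is the claimed bound.

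The main obstacle is the per-summand cumulant bound. The difficulty is that $\exp$ is neither operator monotone nor operator convex, so the scalar Hoeffding argument cannot be copied directly. I would first symmetrize. Let $\X_k'$ be an independent copy of $\X_k$ and $\varepsilon$ an independent Rademacher sign. Since $\Exp\X_k' = \zero\zero^\top$, Jensen's inequality, applied again through Lieb's concave trace map rather than at the semidefinite level, lets us replace $\X_k$ by $\varepsilon(\X_k - \X_k')$ inside $\mathrm{tr}\exp(\H + \log\Exp e^{\theta\,\cdot})$.

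For the symmetric variable, the scalar inequality $\cosh(x) \le e^{x^2/2}$ transfers to $\Exp_\varepsilon e^{\theta \varepsilon \mathbf{W}} = \cosh(\theta\mathbf{W}) \preceq e^{\theta^2 \mathbf{W}^2/2}$, where $\mathbf{W}=\X_k-\X_k'$. We also have $\mathbf{W}^2 \preceq 2\X_k^2 + 2\X_k'^2 \preceq 4\M_k^2$. Finally, I would use operator monotonicity of $\log$, together with the trace-level monotonicity of $\mathrm{tr}\exp(\H+\cdot)$. This avoids ever needing $\exp$ itself to be operator monotone, and it collapses everything to $2\theta^2\M_k^2$.

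Making this chain rigorous requires keeping every comparison either at the trace level or inside $\log$. That bookkeeping is where I expect the care to be needed. The factor $8$ in the exponent, rather than the scalar $2$, is the price paid for the symmetrization.
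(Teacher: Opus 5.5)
The paper gives no proof of this lemma; it is quoted from Tropp~\cite{Tropp_2011}, so the comparison is with his argument. Your outer frame is fine: Markov with $\lambda_1\le\mathrm{tr}$, the estimate $\mathrm{tr}\exp(2\theta^2\sum_k\M_k^2)\le d\,e^{2\theta^2\sigma^2}$, and $\theta=t/(4\sigma^2)$. Your ingredients are also the right ones. The gap is that routing through subadditivity of cgfs commits you to the semidefinite bound $\log\Exp e^{\theta\X_k}\preceq 2\theta^2\M_k^2$, and neither of your steps toward it is justified.

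First, the symmetrization. You want to pass from $\mathrm{tr}\exp(\H+\log\Exp e^{\theta\X_k})$ to $\mathrm{tr}\exp(\H+\log\Exp e^{\theta(\X_k-\X_k')})$ with the expectation inside the logarithm. That needs convexity of $\mathbf{Z}\mapsto\mathrm{tr}\exp(\H+\log\Exp_{\X_k}e^{\theta(\X_k-\mathbf{Z})})$, or operator convexity of $\mathbf{Z}\mapsto e^{\theta(\X_k-\mathbf{Z})}$, which fails. Lieb's theorem gives concavity of $\mathbf{B}\mapsto\mathrm{tr}\exp(\H+\log\mathbf{B})$, and Jensen for a concave map only moves expectations into the argument, the opposite direction.

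Second, even granting that step, you would need $\log\Exp_{\X_k,\X_k'}\cosh(\theta\mathbf{W})\preceq 2\theta^2\M_k^2$ with $\mathbf{W}=\X_k-\X_k'$. Because $\X_k,\X_k'$ are averaged inside the logarithm, your chain $\cosh(\theta\mathbf{W})\preceq e^{\theta^2\mathbf{W}^2/2}$, $\mathbf{W}^2\preceq 4\M_k^2$ closes only if $e^{\theta^2\mathbf{W}^2/2}\preceq e^{2\theta^2\M_k^2}$. That is operator monotonicity of $\exp$, the very thing you meant to avoid. Operator monotonicity of $\log$ does not help, since the comparison sits inside the exponential. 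Operator concavity of $\log$ gives $\Exp\log\preceq\log\Exp$, so the pointwise bound $\log\cosh(\theta\mathbf{W})\preceq\theta^2\mathbf{W}^2/2$ cannot be pulled through the average either. The ``bookkeeping'' you flag is exactly where the argument breaks.

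The repair, which is how Tropp argues, keeps the expectation outside the trace and replaces cgf subadditivity by iteration.
\begin{itemize}
\item Fix a self-adjoint $\H$. Since $\Exp\X_k'=\zero\zero^\top$, apply Jensen to the convex map $\mathbf{Z}\mapsto\mathrm{tr}\exp(\H+\theta(\X_k-\mathbf{Z}))$ and then use the symmetry of $\mathbf{W}$. This gives $\Exp\,\mathrm{tr}\,e^{\H+\theta\X_k}\le\Exp\,\mathrm{tr}\,e^{\H+\theta\varepsilon\mathbf{W}}$.
\item Conditionally on $(\X_k,\X_k')$, apply Lieb's corollary in $\varepsilon$ alone: $\Exp_{\varepsilon}\mathrm{tr}\,e^{\H+\theta\varepsilon\mathbf{W}}\le\mathrm{tr}\exp(\H+\log\cosh(\theta\mathbf{W}))\le\mathrm{tr}\exp(\H+\theta^2\mathbf{W}^2/2)\le\mathrm{tr}\exp(\H+2\theta^2\M_k^2)$. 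Every comparison is now pointwise in $(\X_k,\X_k')$ and uses only monotonicity of $\mathrm{tr}\exp(\H+\cdot)$.
\item This fixed-$\H$ bound is weaker than the semidefinite cgf bound, since Lieb gives $\Exp\,\mathrm{tr}\,e^{\H+\theta\X_k}\le\mathrm{tr}\exp(\H+\log\Exp e^{\theta\X_k})$. So it cannot be fed into subadditivity.
\item Instead, peel off the summands one at a time. At step $k$, condition on $\X_1,\dots,\X_{k-1}$ and take $\H=\theta\sum_{j<k}\X_j+2\theta^2\sum_{j>k}\M_j^2$; this is legitimate by independence.
\item Iterating from $k=n$ down to $k=1$ gives $\Exp\,\mathrm{tr}\,e^{\theta\mathbf{Y}}\le\mathrm{tr}\exp(2\theta^2\sum_k\M_k^2)$. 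Your final optimization then goes through unchanged.
\end{itemize}
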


\begin{lemma}[Davis-Kahan $\sin \theta$ Theorem~\cite{Kahan_1970}] \label{lem:kahan} 
Let $\M, \F$ be symmetric matrices with
eigenvectors $\v_1(\M)$ and $\v_1(\F)$ corresponding to their dominant eigenvalues 
$\lambda_1(\M)$ and $\lambda_1(\F)$.
If $\M$ has a nonzero eigen gap, i.e., $\lambda_1(\M) - \lambda_2(\M) > 0$, then the angle $\theta(\v_1(\M), \v_1(\F))$ satisfies
\[
\sin \theta(\v_1(\M), \v_1(\F)) \;\leq\; \frac{\| \M - \F \|}{\lambda_1(\M) - \lambda_2(\M)} .
\]
\end{lemma}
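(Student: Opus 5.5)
The plan follows the classical Davis--Kahan argument: write the perturbation in the eigenbasis of $\M$ and compare the action of $\M$ and $\F$ on $\v_1(\F)$. Let $\v=\v_1(\F)$ be a unit vector, and let $\mathbf{P}=\mathbf{I}-\v_1(\M)\v_1(\M)^\top$ be the projector onto the span of the eigenvectors of $\M$ with eigenvalues $\lambda_2(\M),\dots,\lambda_n(\M)$. Then $\norm{\mathbf{P}\v}=\sin\theta(\v_1(\M),\v_1(\F))$.

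The key identity comes from $\F\v=\lambda_1(\F)\v$ and the fact that $\mathbf{P}$ commutes with $\M$:
\[
\mathbf{P}(\F-\M)\v=\lambda_1(\F)\,\mathbf{P}\v-\M\mathbf{P}\v=(\lambda_1(\F)\mathbf{I}-\M)\,\mathbf{P}\v .
\]
On the range of $\mathbf{P}$, the matrix $\lambda_1(\F)\mathbf{I}-\M$ has eigenvalues $\lambda_1(\F)-\lambda_i(\M)$ for $i\ge 2$. If $\lambda_1(\F)-\lambda_2(\M)>0$, each of these is at least $\lambda_1(\F)-\lambda_2(\M)$. Hence
\[
(\lambda_1(\F)-\lambda_2(\M))\sin\theta\le\norm{\mathbf{P}(\F-\M)\v}\le\norm{\M-\F}.
\]
This is Davis--Kahan's $\sin\theta$ theorem with constant $1$, and it holds unconditionally with the \emph{mixed} gap $\lambda_1(\F)-\lambda_2(\M)$.

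The main obstacle is replacing the mixed gap by the pure gap $\lambda_1(\M)-\lambda_2(\M)$ that appears in the statement. By Weyl, $\lambda_1(\F)\ge\lambda_1(\M)-\norm{\M-\F}$. This only yields
\[
\sin\theta\le\frac{\norm{\M-\F}}{\lambda_1(\M)-\lambda_2(\M)-\norm{\M-\F}},
\]
which is not the stated form. The replacement is not merely technically hard; it is impossible in general. Take $\M=\mathrm{diag}(1,0)$ and $\F=\mathrm{diag}(1/2-\varepsilon,1/2+\varepsilon)$ with $\varepsilon\in(0,1/2)$. Then $\v_1(\M)=\e_1$, $\v_1(\F)=\e_2$, and $\sin\theta=1$. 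On the other hand, $\norm{\M-\F}=1/2+\varepsilon$ and the gap of $\M$ is $1$. So the inequality as worded would assert $1\le 1/2+\varepsilon$, which is false.

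Consequently, I do not expect to be able to prove \cref{lem:kahan} verbatim, because it is false as written. The argument above establishes exactly the inequality with constant $1$ once the denominator is read as $\lambda_1(\F)-\lambda_2(\M)$, which is the form in~\cite{Kahan_1970}. Alternatively, it establishes the stated form with the pure gap under the extra hypothesis $\norm{\M-\F}\le\frac12(\lambda_1(\M)-\lambda_2(\M))$, at the cost of a factor $2$. For its use in \cref{thm:angle}, that regime is the one controlled by \cref{lem:concentration} anyway.
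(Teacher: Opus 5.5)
Your counterexample is correct, and it shows the lemma is false as written. For $\M=\mathrm{diag}(1,0)$ and $\F=\mathrm{diag}(\frac12-\varepsilon,\frac12+\varepsilon)$ with $0<\varepsilon<\frac12$, both top eigenvectors are unique up to sign ($\e_1$ and $\e_2$), so $\sin\theta=1$. Meanwhile $\norm{\M-\F}/(\lambda_1(\M)-\lambda_2(\M))=\frac12+\varepsilon<1$.

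The paper gives no proof of this lemma, only the citation to Davis--Kahan, and that citation does not support this form. Their $\sin\theta$ theorem uses the separation between the spectrum of one operator on the relevant subspace and the spectrum of the other operator on the complement. That is exactly the mixed gap $\lambda_1(\F)-\lambda_2(\M)$ you isolate. Your derivation of the mixed-gap bound is correct: it combines the identity $\mathbf{P}(\F-\M)\v=(\lambda_1(\F)\mathbf{I}-\M)\mathbf{P}\v$ with the fact that $\lambda_1(\F)\mathbf{I}-\M$ is bounded below by $\lambda_1(\F)-\lambda_2(\M)$ on the $\M$-invariant range of $\mathbf{P}$. It still needs $\lambda_1(\F)>\lambda_2(\M)$, so ``unconditionally'' should be read as ``with no assumption on $\norm{\M-\F}$''.

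The pure-gap version with factor $2$ needs no extra hypothesis. If $\norm{\M-\F}\le\frac12(\lambda_1(\M)-\lambda_2(\M))$, your Weyl step gives the bound below. Otherwise its right-hand side exceeds $1\ge\sin\theta$. Hence
\[
\sin\theta(\v_1(\M),\v_1(\F))\le\frac{2\norm{\M-\F}}{\lambda_1(\M)-\lambda_2(\M)}
\]
always holds; this is the Yu--Wang--Samworth form. Your family of examples, as $\varepsilon\to0$, shows that no constant smaller than $2$ can work.

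This matters for Step~1 of the proof of \cref{thm:angle}. There, $\sqrt{\epsilon}\,(\lambda_1(\Exp\A)-\lambda_2(\Exp\A))$ must be replaced by half that quantity. So the $32$ in the first exponential of \cref{thm:angle}, and of every result derived from it, becomes $128$. The qualitative conclusions survive, but the printed constants rely on the false constant-$1$ statement.
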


\section{Detailed Proofs} \label{app:proofs}

In this section, we provide detailed proofs of our theorems and lemmas in the main text.

\subsection{Proof of \cref{thm:angle}}

\begin{proof}
We proceed as follows:
We split $|\cos\theta(\v_1(\A), \yt)|$ into two terms: one random variable $|\cos\theta(\v_1(\A), \v_1(\Exp \A))|$ and one constant $|\cos\theta(\v_1(\Exp \A), \yt)|$.
The two terms are then properly bounded and then used to analyze $|\cos\theta(\v_1(\A), \yt)|$.
We finalize by ensuring the uniqueness of $\v_1(\A)$ (up to sign).

\paragraph{Step 1: Bounding the random variable $|\cos\theta(\v_1(\A), \v_1(\Exp \A))|$.}

By invoking \cref{lem:kahan} with $\F=\A$ and $\M=\Exp \A$, we have: 
\[
    |\sin\theta(\v_1(\A),\v_1(\Exp \A))| \le \frac{\norm{\A-\Exp \A}}{\lambda_1(\Exp \A)-\lambda_2(\Exp \A)}
\]
Note that by \cref{lem:similar}, we have $\lambda_1(\Exp \A)-\lambda_2(\Exp \A) = (1-2p)(\lambda_1(\E)-\lambda_2(\E)) >0$ since $\lambda_1(\E)$ has multiplicity 1~\cite{meyer_2000}. 
By trigonometric identities, we have:
\begin{align*}
|\cos\theta(\v_1(\A),\v_1(\Exp \A))| 
&= \sqrt{1 - \sin^2(\theta(\v_1(\A),\v_1(\Exp \A)))} \\
&\ge 1 - \sin^2(\theta(\v_1(\A),\v_1(\Exp \A))) \\
&\ge 1 - \frac{\norm{\A-\Exp \A}^2}{\left( \lambda_1(\Exp \A) - \lambda_2(\Exp \A) \right)^2}
\end{align*}

From the above and by \cref{lem:concentration}, for any $0<\epsilon<1$ we have:
\begin{align*}
\Prob(|\cos\theta(\v_1(\A),\v_1(\Exp \A))| \ge 1-\epsilon)
&\ge \Prob\left(\epsilon \ge \frac{\norm{\A-\Exp \A}^2}{\left( \lambda_1(\Exp \A) - \lambda_2(\Exp \A) \right)^2}\right) \\
&= \Prob(\norm{\A-\Exp \A} \le \sqrt{\epsilon} \left( \lambda_1(\Exp \A) - \lambda_2(\Exp \A) \right)) \\
&= 1- \Prob(\norm{\A-\Exp \A} \ge \sqrt{\epsilon} \left( \lambda_1(\Exp \A) - \lambda_2(\Exp \A) \right)) \\
&\ge 1-2n \exp\left( \frac{-\epsilon (\lambda_1(\Exp \A)-\lambda_2(\Exp \A))^2}{32(1-p)^2d_{\max}} \right).
\end{align*}

\paragraph{Step 2: Bounding the constant $|\cos\theta(\v_1(\Exp \A), \yt)|$.}

By \cref{lem:similar} and since $\Diag(\yt)^2$ is the identity matrix, we have:
\begin{align*}
\Exp[\A] \, \Diag(\yt) \v_1(\E)
&= (1 - 2p) \, \Diag(\yt) \, \E \, \Diag(\yt)^2 \v_1(\E) \\
&= (1 - 2p) \, \Diag(\yt) \, \E \, \v_1(\E) \\
&= (1 - 2p) \, \lambda_1(\E) \Diag(\yt) \v_1(\E) ,
\end{align*}
where the last step follows from the definition of eigenvalues, i.e., $\E\,\v_1(\E) = \lambda_1(\E) \v_1(\E)$.
Note that the above implies that $\lambda_1(\Exp \A) = (1-2p)\lambda_1(\E)$ and $\v_1(\Exp \A) = \Diag(\yt) \v_1(\E)$.
This can be confirmed by the eigenvalue definition, i.e. $\Exp[\A]\v_1(\Exp \A) = \lambda_1(\Exp \A)\v_1(\Exp \A)$ which leads to the same expression as above.
Therefore, since $\yt \in \{-1,+1\}^n$, we have:
\begin{align*}
    |\cos\theta(\v_1(\Exp \A),\yt))|
    &= |\cos\theta(\Diag(\yt) \v_1(\E),\yt)| \\
    &= \frac{|\innerprod{ \Diag(\yt) \v_1(\E) }{ \yt }|}{\norm{\Diag(\yt) \v_1(\E)}\norm{\yt}} \\
    &= \frac{|\sum_{i} y_i^* [\v_1(\E) ]_i y_i^*|}{\norm{\v_1(\E)}\norm{\one}} \\
    &=  \frac{|\innerprod{\v_1(\E)}{\one}|}{\norm{\v_1(\E)}\norm{\one}} \\
    &= |\cos\theta(\v_1(\E), \one)| \\
    &= \delta,
\end{align*}
where the last step follows from our definition in the theorem statement.

\paragraph{Step 3: Using the random variable $|\cos\theta(\v_1(\A), \v_1(\Exp \A))|$ and the constant $|\cos\theta(\v_1(\Exp \A), \yt)|$ to bound $|\cos\theta(\v_1(\A), \yt)|$.}

For simplicity of presentation, in what follows, we argue for angles $\theta(\v_1(\A), \yt) \in [0,\pi/2]$ and $\theta(\v_1(\A), \v_1(\Exp \A)) \in [0,\pi/2]$.\footnote{Arguing for an angle $\theta \in [\pi/2,\pi]$ requires considering $\theta-\pi/2$, and arguing for an angle $\theta \in [-\pi,0]$ requires considering $-\theta$ instead.}
Let $\theta_1 = \theta(\v_1(\A), \v_1(\Exp \A))$ and $\theta_2 = \theta(\v_1(\Exp \A), \yt)$.
By using trigonometric identities and the result in Step 2, we have:
\begin{align*}
|\cos\theta(\v_1(\A), \yt)| &= \cos\theta(\v_1(\A), \yt) \\
&\ge \cos(\theta_1 + \theta_2) \\
&= \cos\theta_1 \cos\theta_2 - \sin\theta_1 \sin\theta_2 \\
&\ge \delta \cos\theta_1 - \sqrt{1-\delta^2} \sin\theta_1 .
\end{align*}
Since we are arguing for an angle $\theta_1 \in [0,\pi/2]$ for simplicity of presentation, by the result in Step 1, we have:
\begin{align*}
    \Prob(|\cos\theta(\v_1(\A), \yt)| \ge 1-\gamma) 
    &\ge \Prob(\delta \cos\theta_1 - \sqrt{1-\delta^2} \sin\theta_1 \ge 1-\gamma) \\
    &= \Prob(\cos{\theta_1}\le \delta - \delta \gamma - \sqrt{\gamma (\delta^2-1)(\gamma-1)}) + \Prob(\cos{\theta_1}\ge \delta - \delta \gamma + \sqrt{\gamma (\delta^2-1)(\gamma-1)}) \\
    &\ge \Prob(\cos{\theta_1}\ge \delta-\delta \gamma + \sqrt{\gamma (\delta^2-1)(\gamma-1)}) \\
    &= \Prob(\cos\theta(\v_1(\A), \v_1(\Exp \A)) \ge \delta-\delta \gamma + \sqrt{\gamma (\delta^2-1)(\gamma-1)}) \\
    &= \Prob(|\cos\theta(\v_1(\A), \v_1(\Exp \A))| \ge \delta-\delta \gamma + \sqrt{\gamma (\delta^2-1)(\gamma-1)}) \\
    &\ge 1-2n \exp\left( \frac{-(1-\delta + \delta \gamma - \sqrt{\gamma (\delta^2-1)(\gamma-1)}) (\lambda_1(\Exp \A)-\lambda_2(\Exp \A))^2}{32(1-p)^2d_{\max}} \right) \\
    &= 1-2n \exp\left( \frac{-(1-\delta + \delta \gamma - \sqrt{\gamma (\delta^2-1)(\gamma-1)}) (1-2p)^2(\lambda_1(\E)-\lambda_2(\E))^2}{32(1-p)^2d_{\max}} \right),
\end{align*}
where the last simplification step follows from \cref{lem:similar}.

\paragraph{Step 4: Ensuring the uniqueness of $\v_1(\A)$ (up to sign).}

In all the expressions above, we have assumed that $\v_1(\A)$ is unique (up to sign).
As a counterexample, when the $\lambda_1(\A)=\lambda_2(\A)$, there exists infinitely many dominant eigenvectors.
In order to ensure uniqueness of $\v_1(\A)$ (up to sign), we need to ensure that $\lambda_1(\A)-\lambda_2(\A)>0$.
By \cref{lem:eigengap}, we have:
\begin{align*}
\Prob\left(\lambda_1(\A)-\lambda_2(\A)>0 \right) & \ge 1 - 2n \exp\left( \frac{ -(\lambda_1(\Exp \A)-\lambda_2(\Exp \A))^2 }{ 128(1-p)^2 d_{\max} } \right) \\
&= 1 - 2n \exp\left( \frac{ -(1-2p)^2(\lambda_1(\E)-\lambda_2(\E))^2 }{ 128(1-p)^2 d_{\max} } \right)
\end{align*} 
where the last simplification step follows from \cref{lem:similar}.
The final bound comes from joining the results in Steps 3 and 4, and since $(1-r)(1-s) \geq 1-r-s$ for $r,s\in[0,1]$.
\end{proof}

\subsection{Proof of \cref{thm:proportion}}

\begin{proof}
Next, we reason about the maximum angle deviation in order to guarantee a minimum proportion of correctly recovered node labels $\alpha$.
Recall $\yt=(y_1^*,y_2^*,\ldots,y_n^*)$ where each $y_i^* \in \{-1,+1\}$.
Note that the definition of $\rho(\v_1(\A),\yt)$ considers both $\v_1(\A)$ and $-\v_1(\A)$ and thus $\rho(\v_1(\A),\yt) \geq 1/2$.
Therefore, without loss of generality, assume $\alpha>1/2$. Let $\alpha=k/n$ where $k>n/2$.

Let $\S$ be the set of vectors with $l$ sign agreements with $\yt$, for $n/2\leq l<k$, defined as \begin{align*}
\S = \bigcup_{T \subseteq \{1,\dots,n\}, n/2 \leq |T| < k} \S_T \, ,
\end{align*}
where
\begin{align*}
\S_T = \{\s \in \mathbb R^n : s_i = (2 \times 1[i \in T]-1)\,\sign(y_i^*)\,c_i, c_i>0, \norm{\c}_2 = 1\} \, .
\end{align*}
Consider $\v_1(A) \in \S$.
For $n/2\leq l<k$, without loss of generality, assume the first $l$ entries agree in sign with $\yt$.
That is, $\v_1(\A)=(\sign(y_1^*)\,c_1,\dots,\sign(y_l^*)\,c_l,
-\sign(y_{l+1}^*)\,c_{l+1},\dots,-\sign(y_n^*)\,c_n)$.
Recall $\norm{\v_1(\A)}_2=1$ and $\norm{\yt}_2=\sqrt{n}$.
For clarity, let $\one_m$ be a vector of $m$ ones.
We have:
\begin{align*}
|\cos\theta(\v_1(\A),\yt)| & = \frac{|\innerprod{\v_1(\A)}{\yt}|}{\norm{\v_1(\A)}_2\norm{\yt}_2} \\
 & = |c_1+\dots+c_l-c_{l+1}-\dots-c_n|/\sqrt{n} \\
 & \leq \max(c_1+\dots+c_l,c_{l+1}+\dots+c_n)/\sqrt{n} \\
 & = \max(\innerprod{(c_1,\dots,c_l)}{\one_l},\innerprod{(c_{l+1},\dots,c_n)}{\one_{n-l}}/\sqrt{n} \\
 & \leq \max(\sqrt{c_1^2+\dots+c_l^2}\sqrt{l},\sqrt{c_{l+1}^2+\dots+c_n^2}\sqrt{n-l})/\sqrt{n} \quad \text{by Cauchy-Schwarz inequality} \\
 & \leq \max(\sqrt{l},\sqrt{n-l})/\sqrt{n} \\
 & = \sqrt{l/n} \\
 & < \sqrt{k/n} \\
 & = \sqrt{\alpha}.
\end{align*}
Thus, if $\rho(\v_1(\A),\yt)<\alpha$ then $|\cos\theta(\v_1(\A),\yt)|<\sqrt{\alpha}$. By the contrapositive: if $|\cos\theta(\v_1(\A),\yt)|\geq\sqrt{\alpha}$ then $\rho(\v_1(\A),\yt)\geq\alpha$.
Therefore:
\begin{align*}
\Prob(\rho(\v_1(\A), \yt) \ge \alpha) \geq \Prob\left( |\cos\theta(\v_1(\A), \yt)| \ge \sqrt{\alpha} \right) .
\end{align*}
By invoking \cref{thm:angle} with $\gamma = 1-\sqrt{\alpha}$, we prove our claim.
\end{proof}

\subsection{Proof of \cref{lem:concentration}}

\begin{proof}
Note that:
\begin{align*}
\norm{\A-\Exp \A} & = \max(\lambda_1(\A-\Exp \A), \; -\lambda_n(\A-\Exp \A)) \\
 & = \max(\lambda_1(\A-\Exp \A), \; \lambda_1(\Exp[\A]-\A)).
\end{align*}
Since $\norm{\A-\Exp \A}$ depends on both $\lambda_1(\A-\Exp \A)$ and $\lambda_1(\Exp[\A]-\A)$, we bound them separately. 

To bound $\Prob(\lambda_1(\A-\Exp \A) \ge t)$, we apply \cref{lem:hoeffding} on $\A-\Exp[\A]$. 
Let $(i,j) \in E$ be the $k$-th edge in the edge set $E$.
We define:
\[\X_k = (A_{ij} - \Exp[A_{ij}]) (\e_i \e_j^\top + \e_j \e_i^\top)\] 

Since each $A_{ij}$ is independent, the above satisfies the requirement in \cref{lem:hoeffding} that $\X_k$ are independent, random, self-adjoint matrices.
Since $A_{ij} - \Exp[A_{ij}]$ has expected value zero, then $\Exp \X_k = \zero\zero^\top$.
Furthermore, note that $(A_{ij} - \Exp[A_{ij}])^2 \le (-1-(1-2p))^2 = 4(1-p)^2$.
Then, we let $\M_k^2 = 4(1-p)^2 (\e_i \e_i^\top + \e_j \e_j^\top)$ in order to satisfy $\X_k^2 \preceq \M_k^2$, also required in \cref{lem:hoeffding}.
Note that:
\begin{align*}
\sigma^2 & = \norm{\sum_k \M_k^2} \\
 & = \norm{4(1-p)^2 \sum_{k=1}^{n} d_i(\e_i \e_i^\top)} \\
 & = 4(1-p)^2 d_{\max}.
\end{align*}
Thus, by invoking \cref{lem:hoeffding}, we have:
\begin{align*}
\Prob(\lambda_1(\A-\Exp \A) \ge t) & = \Prob\left( \lambda_1\left(\sum_k \X_k\right) \ge t \right) \\
 & \le n e^{-t^2/8\sigma^2} \\
 & = n e^{\frac{-t^2}{32(1-p)^2 d_{\max}}}.
\end{align*}

To bound $\Prob(\lambda_1(\Exp[\A]-\A)) \ge t)$, we apply \cref{lem:hoeffding} on $\Exp[\A]-\A$ and undergo nearly the same process as above, but now we negate each of $\X_k$ from before, while $\M_k$ and $\sigma^2$ remains the same.
Consequently, we get the following upper bound for $\Prob(\lambda_1(\Exp[\A]-\A))$ as the second bound needed to complete our proof.
Note that this turns out to be the same bound as before.
That is,
\[
\Prob(\lambda_1(\Exp[\A]-\A) \ge t) \le n e^{\frac{-t^2}{32(1-p)^2 d_{\max}}}.
\]

By using the union bound, we have:
\begin{align*}
\Prob\left(\norm{\A-\Exp \A} \ge t \right) & = \Prob\left( \max(\lambda_1(\A-\Exp \A), \; \lambda_1(\Exp[\A]-\A)) \geq t \right) \\
 & = \Prob\left( \lambda_1(\A-\Exp \A) \geq t \lor \lambda_1(\Exp[\A]-\A) \geq t \right) \\
 & \le \Prob\left( \lambda_1(\A-\Exp \A) \ge t \right) + \Prob\left( \lambda_1(\Exp[\A] - \A) \ge t \right) \\
 & \le 2 n e^{\frac{-t^2}{32(1-p)^2 d_{\max}}},
\end{align*}
and we prove our claim.
\end{proof}

\subsection{Proof of \cref{lem:eigengap}}

\begin{proof}
By the eigenvalue stability inequality, based on Weyl's inequality~\cite{Tao_2010}, we have that:
\[-\norm{\A-\Exp \A} \le \lambda_1(\A)-\lambda_1(\Exp \A) \le \norm{\A-\Exp \A},\] and
\[-\norm{\A-\Exp \A} \le \lambda_2(\A)-\lambda_2(\Exp \A) \le \norm{\A-\Exp \A}.\]

Putting these together, we have:
\[\lambda_1(\A)-\lambda_2(\A) \ge \lambda_1(\Exp \A) - \lambda_2(\Exp \A) -2\norm{\A-\Exp \A}.\]

By invoking \cref{lem:concentration} with $t=\frac{\lambda_1(\Exp \A) - \lambda_2(\Exp \A)}{2}$, we have: 
\[
\Prob\left( \norm{\A-\Exp \A} \ge \frac{\lambda_1(\Exp \A) - \lambda_2(\Exp \A)}{2} \right) \leq 2n e^{\frac{-(\lambda_1(\Exp \A) - \lambda_2(\Exp \A))^2}{128(1-p)^2d_{\max}}}.
\]

Finally, note that:
\begin{align*}
\Prob(\lambda_1(\A)-\lambda_2(\A)>0) 
&\ge \Prob(\lambda_1(\Exp \A) - \lambda_2(\Exp \A)-2\norm{\A-\Exp \A}>0) \\
&= \Prob(\norm{\A-\Exp \A} < \frac{\lambda_1(\Exp \A)-\lambda_2(\Exp \A)}{2}) \\
&= 1 - \Prob(\norm{\A-\Exp \A} \ge \frac{\lambda_1(\Exp \A)-\lambda_2(\Exp \A)}{2}) \\
&\ge 1- 2n e^{\frac{-(\lambda_1(\Exp \A) - \lambda_2(\Exp \A))^2}{128(1-p)^2d_{\max}}},
\end{align*}
which proves our claim.
\end{proof}

\subsection{Proof of \cref{lem:bound_eigengap}}

\begin{proof}
We first find a lower bound for $\lambda_1(\E)$, by using its Rayleigh Quotient definition:
\begin{align*}
\lambda_1(\E) & = \max_{\x \neq \zero} \frac{\x^\top \E \x}{\x^\top \x} \\
 & \ge \frac{\one^\top \E \one}{\one^\top \one} \\
 & = \frac{\sum_{i=1}^n d_i}{n} \\
 & = d_\avg .
\end{align*}
Then, we find an upper bound for $\lambda_2(\E)$, by using the Courant-Fischer Theorem~\cite{meyer_2000}, as follows:
\begin{align*}
\lambda_2(\E) & = \min_{\b} \max_{\norm{\x}=1, \b^\top \x=0} \innerprod{\E\x}{\x} \\
&\le \max_{\norm{\x}=1, \one^\top \x=0} \x^\top \E \x \\
&=   \max_{\norm{\x}=1, \one^\top \x=0} \left(\x^\top \D \x - \x^\top \L \x \right) \\
&= \max_{\norm{\x}=1, \one^\top \x=0} \left(\sum_{i=1}^n d_i x_i^2 - \x^\top \L \x \right) \\
&= \max_{\norm{\x}=1, \one^\top \x=0} \left(\sum_{i=1}^n d_i x_i^2 \right) - \min_{\norm{\x}=1, \one^\top \x=0} \x^\top \L \ x \\
&= d_{\max} - \lambda_{n-1}(\L).
\end{align*}
By combining both bounds, we get $\lambda_1(\E) - \lambda_2(\E) \ge \lambda_{n-1}(\L)-d_{\max}+d_\avg$, and we prove our claim.
\end{proof}

\section{Additional Experiments}

\subsection{More Comparisons Across Different Classes of Graphs} \label{app:moregraphs}

\begin{figure}
    \centering
    \begin{subfigure}[t]{0.48\textwidth}
        \centering
        \includegraphics[width=\linewidth,trim={10 0 50 35},clip]{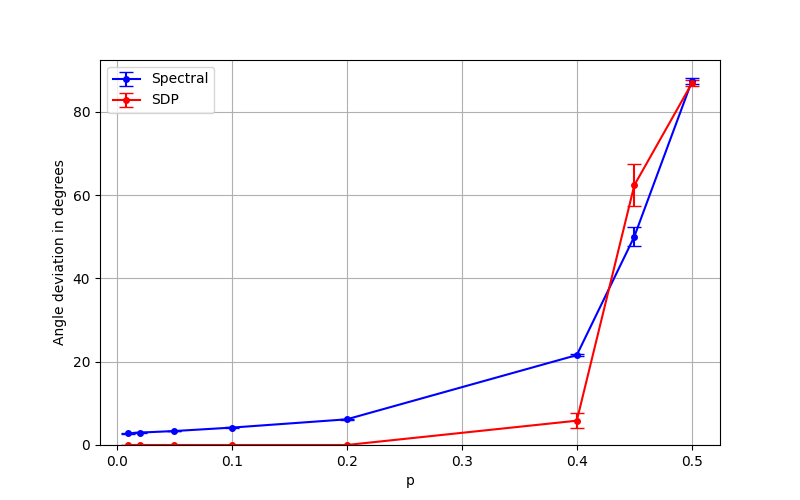}
        \caption{}
    \end{subfigure}
    \hfill 
    \begin{subfigure}[t]{0.48\textwidth}
        \centering
        \includegraphics[width=\linewidth,trim={10 0 50 35},clip]{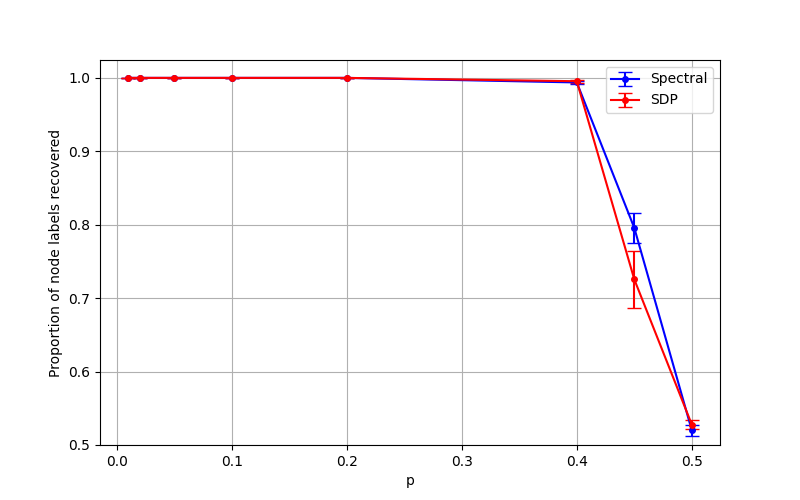}
        \caption{}
    \end{subfigure}
    \caption{Effect of edge-noise parameter $p$ on the angle deviation (a) and proportion of correctly recovered node labels (b) for Erd\"{o}s–R\'{e}nyi graphs (60\% edge density) of $n=200$ nodes.
    Error bars at 95\% confidence level over 30 repetitions.
    Our spectral method (Spectral) and semidefinite programming (SDP) produced comparable results.}
    \label{fig:ER-p}
\end{figure}

\begin{figure}
    \centering
    \begin{subfigure}[t]{0.48\textwidth}
        \centering
        \includegraphics[width=\linewidth,trim={10 0 50 35},clip]{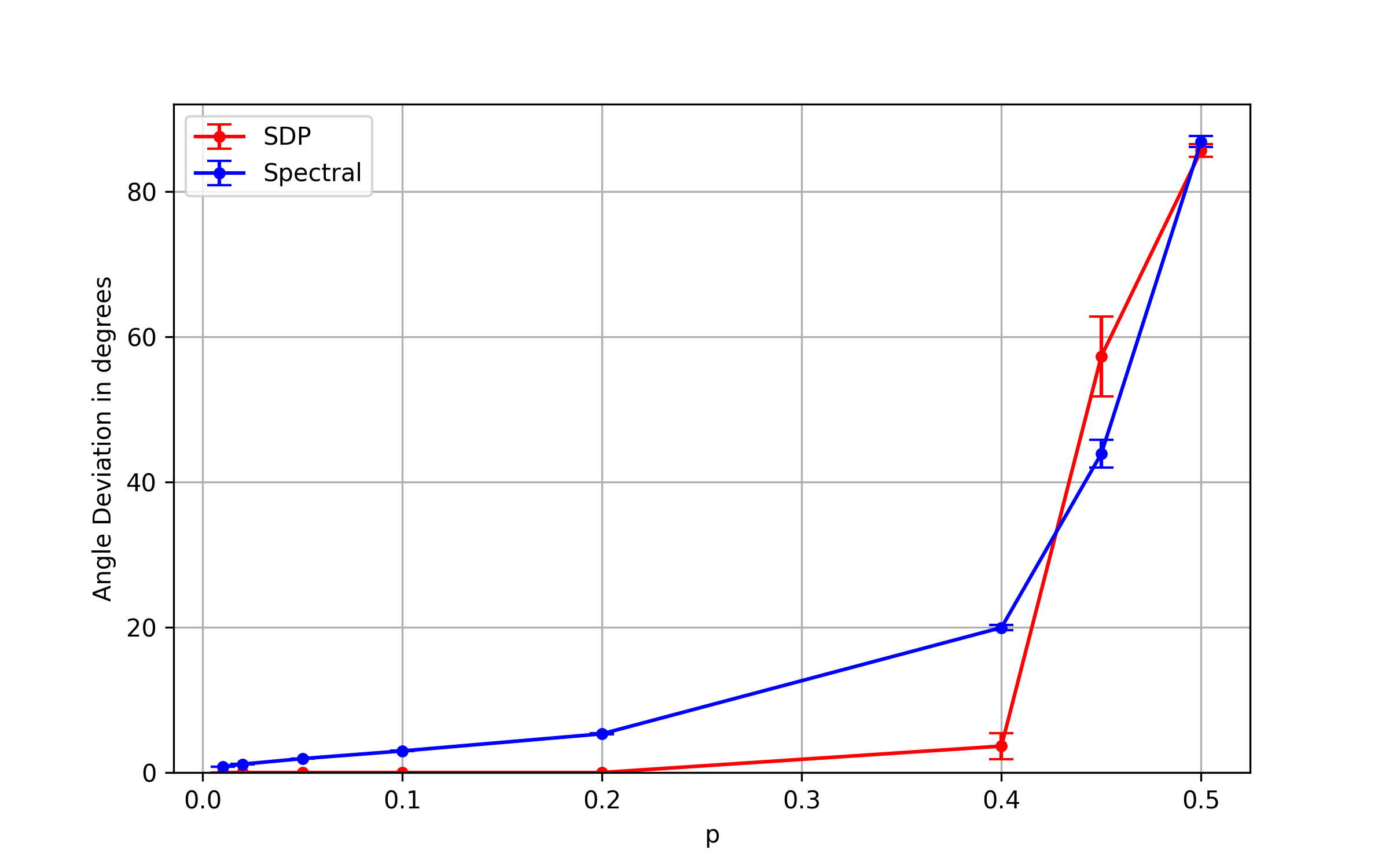}
        \caption{}
    \end{subfigure}
    \hfill
    \begin{subfigure}[t]{0.48\textwidth}
        \centering
        \includegraphics[width=\linewidth,trim={10 0 50 35},clip]{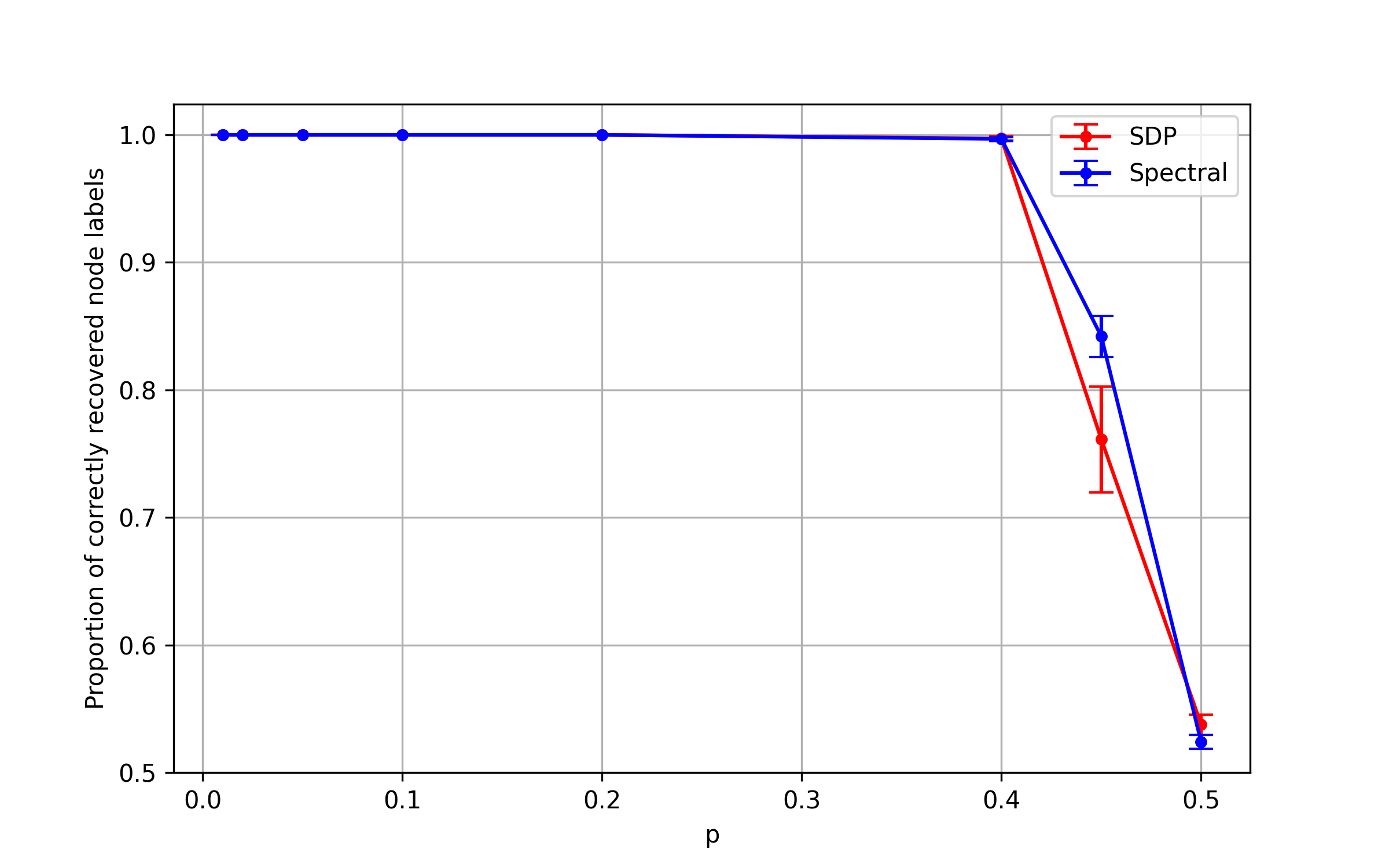}
        \caption{}
    \end{subfigure}
    \caption{Effect of edge-noise parameter $p$ on the angle deviation (a) and proportion of correctly recovered node labels (b) for complete graphs of $n=200$ nodes.
    Error bars at 95\% confidence level over 30 repetitions.
    Our spectral method (Spectral) and semidefinite programming (SDP) produced comparable results.}
    \label{fig:complete-p}
\end{figure}

\begin{figure}
    \centering
    \begin{subfigure}[t]{0.48\textwidth}
        \centering
        \includegraphics[width=\linewidth,trim={10 0 50 35},clip]{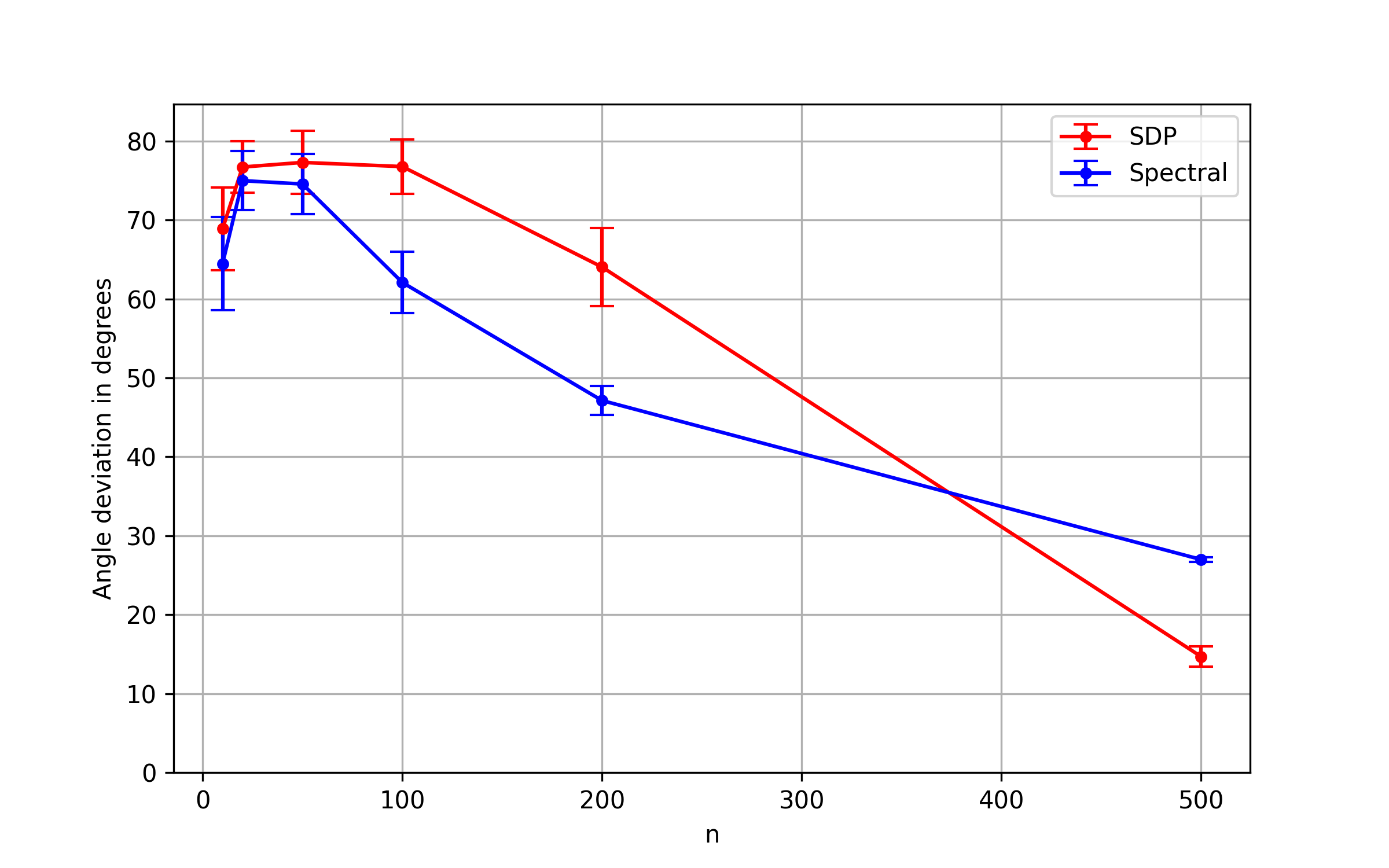}
        \caption{}
    \end{subfigure}
    \hfill
    \begin{subfigure}[t]{0.48\textwidth}
        \centering
        \includegraphics[width=\linewidth,trim={10 0 50 35},clip]{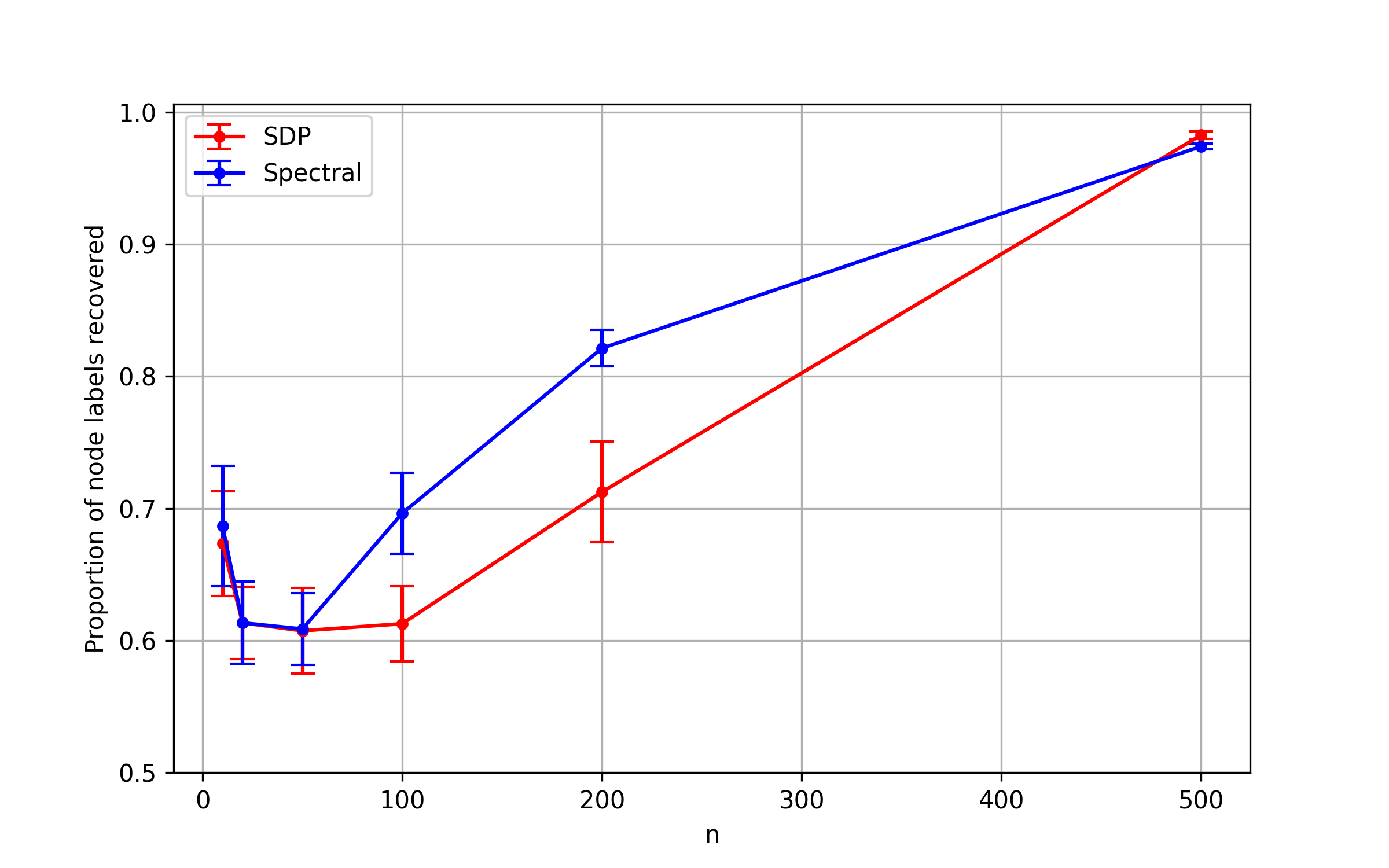}
        \caption{}
    \end{subfigure}
    \caption{Effect of number of nodes $n$ on the angle deviation (a) and proportion of correctly recovered node labels (b) for Erd\"{o}s–R\'{e}nyi graphs (60\% edge density) with $p=0.4$.
    Error bars at 95\% confidence level over 30 repetitions.
    Our spectral method (Spectral) and semidefinite programming (SDP) produced comparable results.}
    \label{fig:ER-n}
\end{figure}

\begin{figure}
    \centering
    \begin{subfigure}[t]{0.48\textwidth}
        \centering
        \includegraphics[width=\linewidth,trim={10 0 50 35},clip]{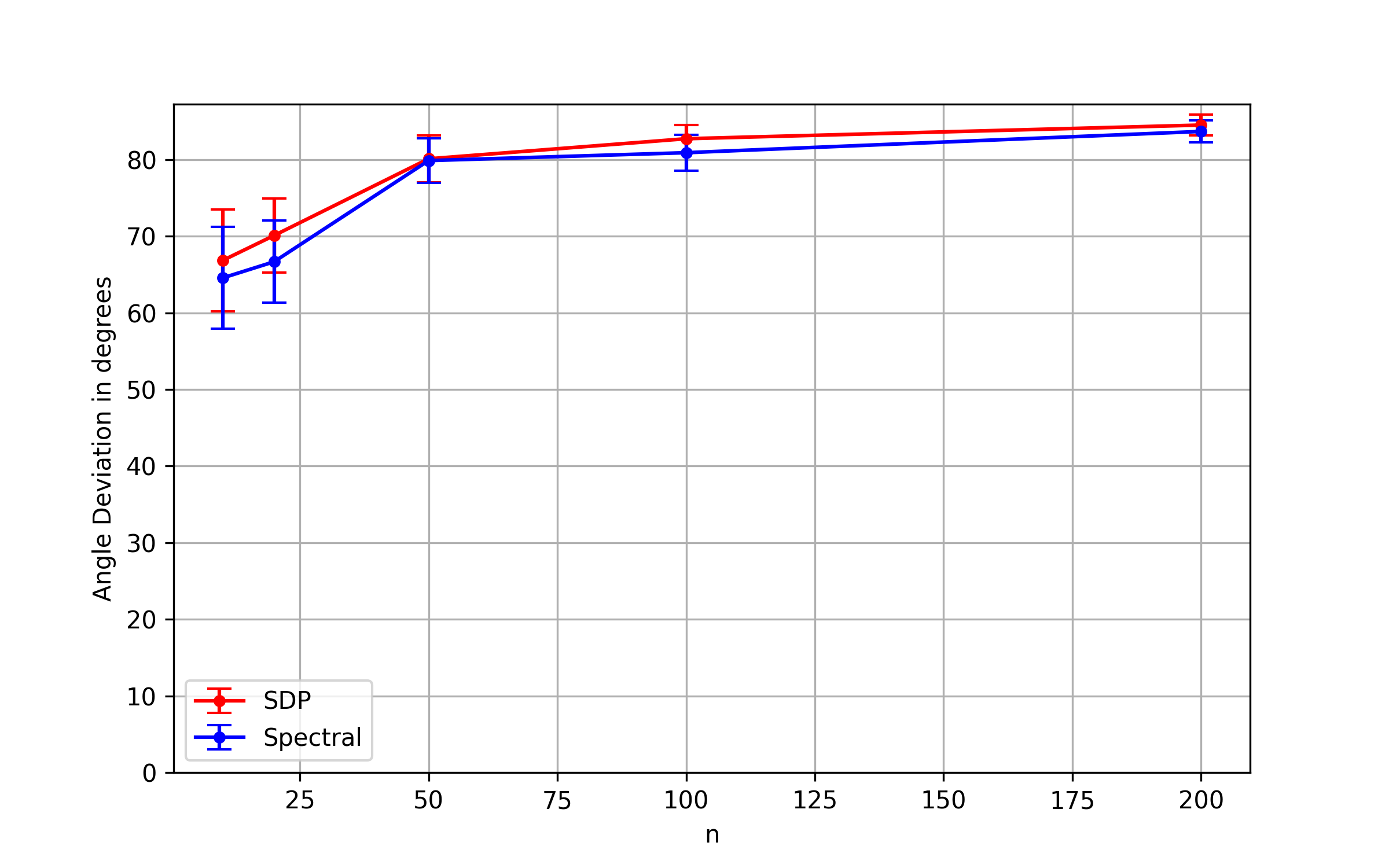}
        \caption{}
    \end{subfigure}
    \hfill
    \begin{subfigure}[t]{0.48\textwidth}
        \centering
        \includegraphics[width=\linewidth,trim={10 0 50 35},clip]{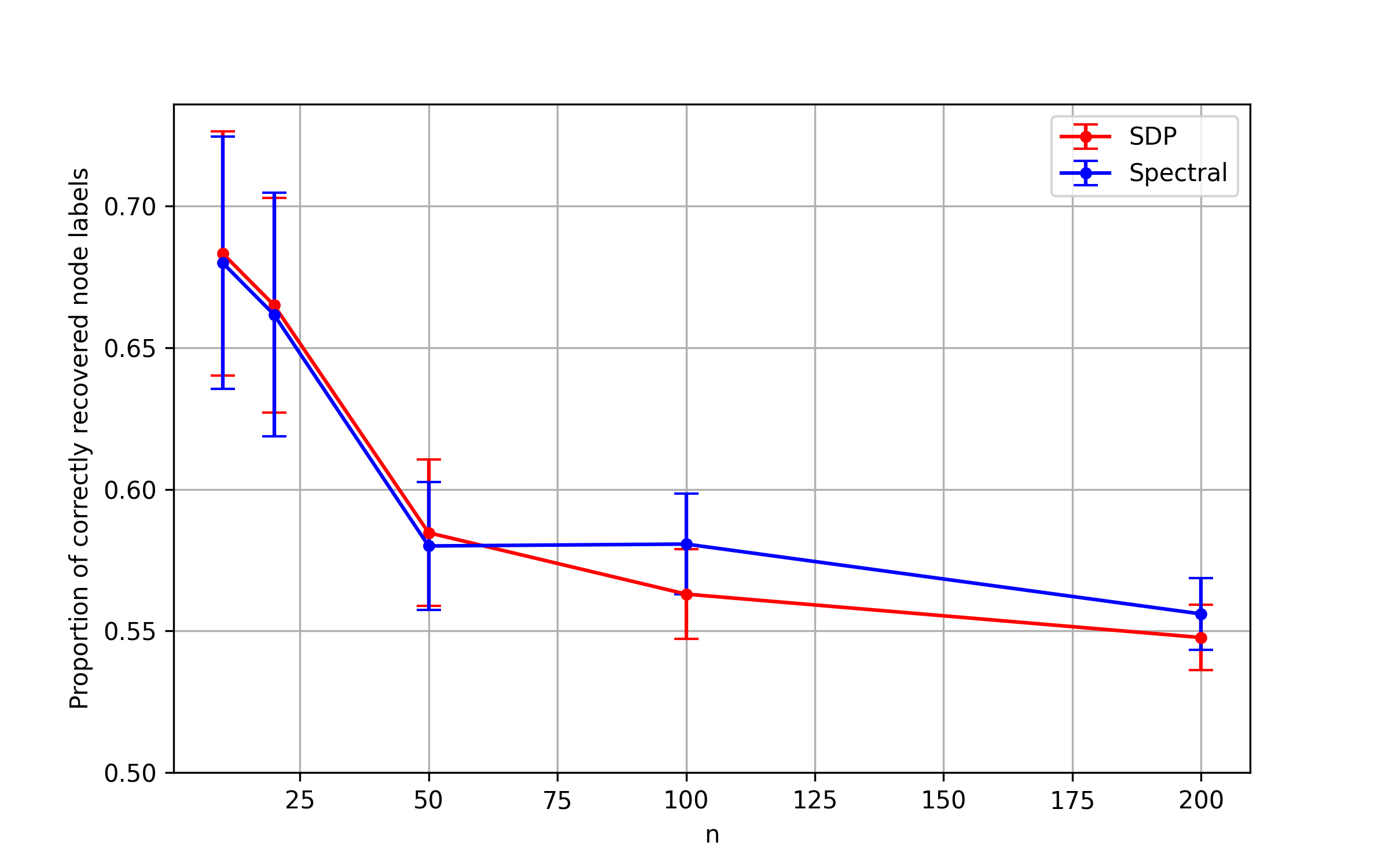}
        \caption{}
    \end{subfigure}
    \caption{Effect of number of nodes $n$ on the angle deviation (a) and proportion of correctly recovered node labels (b) for $d$-regular expanders ($d=6$) with $p=0.4$.
    Error bars at 95\% confidence level over 30 repetitions.
    Our spectral method (Spectral) and semidefinite programming (SDP) produced comparable results.}
    \label{fig:expander-n}
\end{figure}

\begin{figure}
    \centering
    \begin{subfigure}[t]{0.48\textwidth}
        \centering
        \includegraphics[width=\linewidth,trim={10 0 50 35},clip]{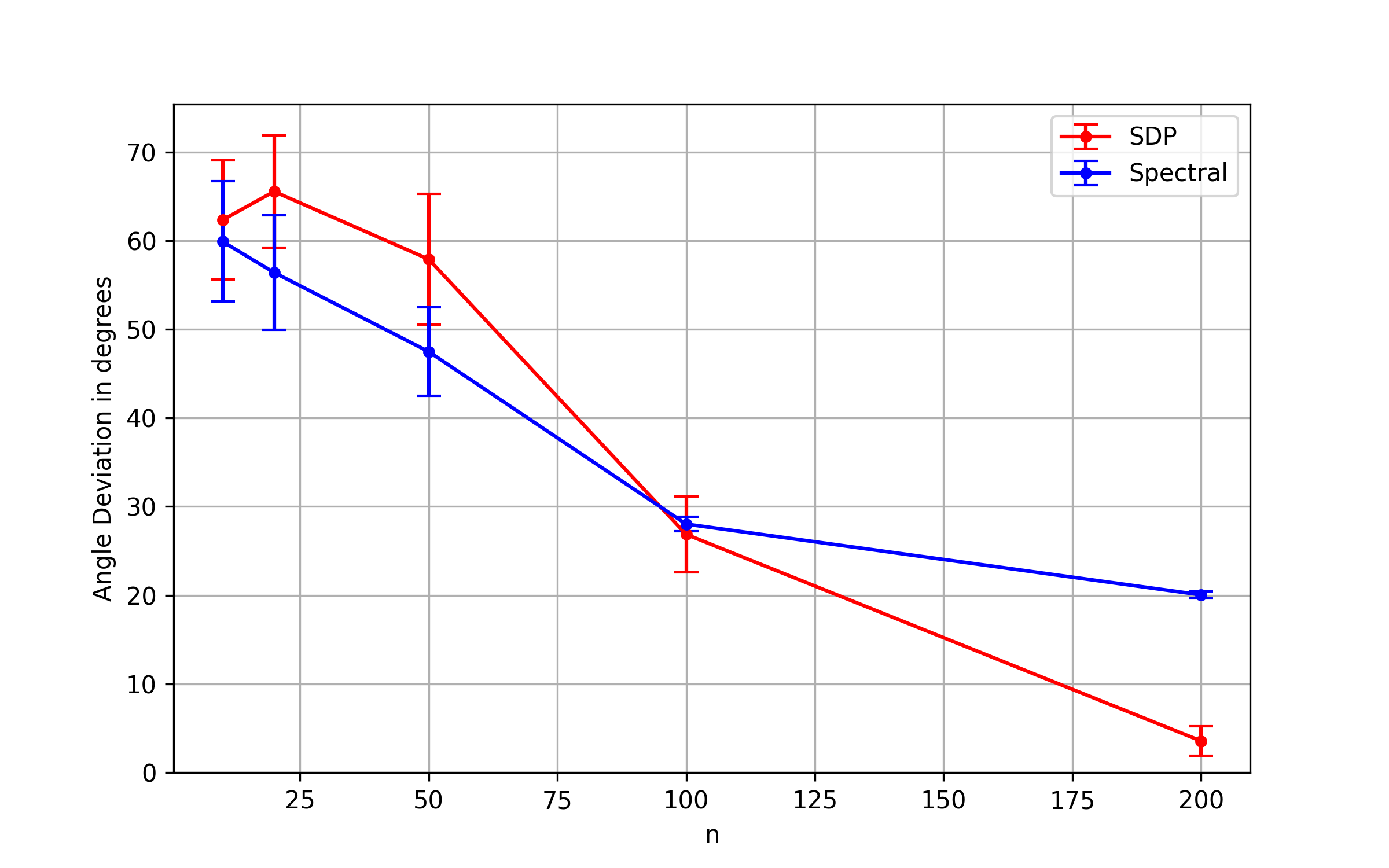}
        \caption{}
    \end{subfigure}
    \hfill
    \begin{subfigure}[t]{0.48\textwidth}
        \centering
        \includegraphics[width=\linewidth,trim={10 0 50 35},clip]{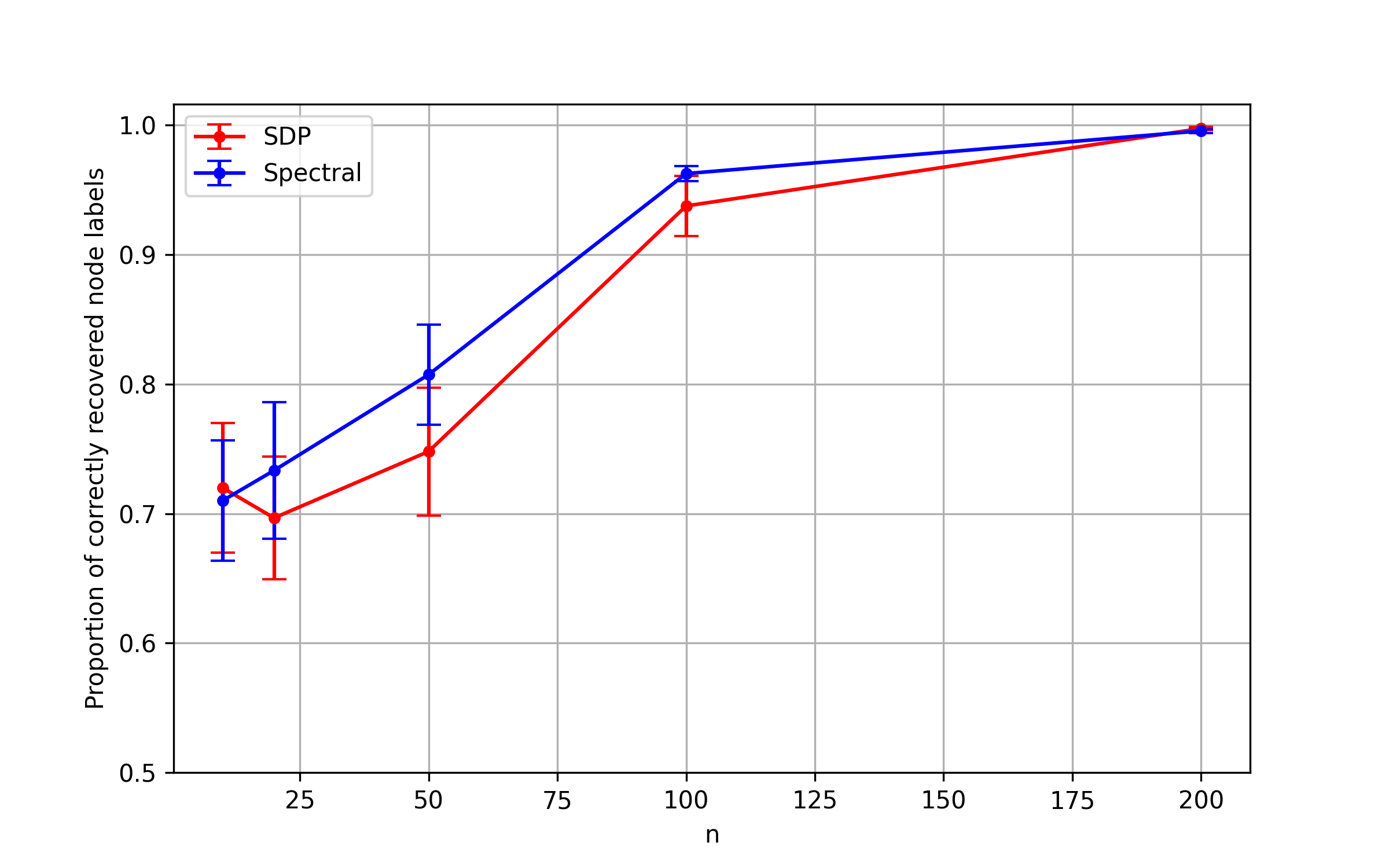}
        \caption{}
    \end{subfigure}
    \caption{Effect of number of nodes $n$ on the angle deviation (a) and proportion of correctly recovered node labels (b) for complete graphs with $p=0.4$.
    Error bars at 95\% confidence level over 30 repetitions.
    Our spectral method (Spectral) and semidefinite programming (SDP) produced comparable results.}
    \label{fig:complete-n}
\end{figure}

First, we run experiments on different classes of graphs, regarding the effect of the edge-noise parameter $p$ in the angle deviation and proportion of correctly recovered node labels.
\cref{fig:ER-p,fig:expander-p,fig:complete-p} show that our spectral method and semidefinite programming produce comparable results for Erd\"{o}s–R\'{e}nyi graphs, regular expander graphs and complete graphs.

Second, we run additional experiments for different classes of graphs, regarding the effect of the number of nodes $n$ in the angle deviation and proportion of correctly recovered node labels.
\cref{fig:ER-n,fig:expander-n,fig:complete-n} show that our spectral method and semidefinite programming produce comparable results for Erd\"{o}s–R\'{e}nyi graphs, regular expander graphs and complete graphs.
Recall that in the main text, \cref{fig:runtime-comparison} showed that our spectral method runs significantly faster than semidefinite programming.

\subsection{Large Real-World Structured Prediction} \label{app:realworld}

We performed experiments in Epinions, a real-world case with 131,828 nodes and 841,372 edges. Our spectral algorithm runs in 0.1484 seconds while SDP would struggle at this scale in commercial solvers. To validate whether the spectral approach result is meaningful, the proportion positive edges (over all edges) inside nodes with $y_i=+1$ is 82.1\%. For nodes with $y_i=-1$ the proportion of positive edges is 94.3\%. The proportion of positive edges between nodes with $y_i=+1$ and nodes with $y_i=-1$ is 23.0\%. This result is meaningful as we expect more positive edges inside nodes with the same $y_i$ and few positive edges between different $y_i$.
(On each case the proportion of negative edges is 100\% minus the proportion of positive edges.)

\end{document}